\documentclass[11pt]{article}
\usepackage[includeheadfoot,margin=1in]{geometry}
\usepackage{times}
\usepackage{amsmath,amsfonts,amssymb,amsthm,bm}
\usepackage{mathrsfs}
\usepackage{mathtools}
\usepackage{enumerate}
\usepackage{verbatim}
\usepackage{commath}
    \usepackage[usenames]{color}
    \definecolor{plum}  {rgb}{.4,0,.4}
    \definecolor{BrickRed} {rgb}{0.6,0,0}
	\definecolor{DarkBlue} {rgb}{0,0,0.6}
\usepackage[plainpages=false,pdfpagelabels,colorlinks=true,linkcolor=BrickRed,citecolor=plum,urlcolor=DarkBlue]{hyperref}
\usepackage[mathcal]{euscript}
\usepackage[round]{natbib}
\usepackage{cleveref}

\def\ddefloop#1{\ifx\ddefloop#1\else\ddef{#1}\expandafter\ddefloop\fi}

\def\ddef#1{\expandafter\def\csname b#1\endcsname{\ensuremath{\boldsymbol{#1}}}}
\ddefloop ABCDEFGHIJKLMNOPQRSTUVWXYZabcdefghijklnopqrstuvwxyz\ddefloop

\def\ddef#1{\expandafter\def\csname c#1\endcsname{\ensuremath{\mathcal{#1}}}}
\ddefloop ABCDEFGHIJKLMNOPQRSTUVWXYZ\ddefloop
 
\def\ddef#1{\expandafter\def\csname s#1\endcsname{\ensuremath{\mathsf{#1}}}}
\ddefloop ABCDEFGHIJKLMNOPQRSTUVWXYZ\ddefloop

\def\Reals{{\mathbb R}}

\def\Ex{{\mathbf E}} 
\def\Pr{{\mathbf P}} 
\def\eps{\varepsilon}

\def\trn{{\mathsf T}} 

\newcommand{\Sin}{S_{\mathsf{in}}}
\newcommand{\Sout}{S_{\mathsf{out}}}

\newcommand{\ii}{{\mathsf{ii}}}
\newcommand{\io}{{\mathsf{io}}}
\newcommand{\oi}{{\mathsf{oi}}}
\newcommand{\oo}{{\mathsf{oo}}}

\makeatletter
\newsavebox{\@brx}
\newcommand{\llangle}[1][]{\savebox{\@brx}{\(\m@th{#1\langle}\)}%
  \mathopen{\copy\@brx\kern-0.5\wd\@brx\usebox{\@brx}}}
\newcommand{\rrangle}[1][]{\savebox{\@brx}{\(\m@th{#1\rangle}\)}%
  \mathclose{\copy\@brx\kern-0.5\wd\@brx\usebox{\@brx}}}
\makeatother

\newtheorem{theorem}{Theorem}

\newtheorem{lemma}{Lemma}
\newtheorem{corollary}{Corollary}
\newtheorem{remark}{Remark}

\begin{document}

\title{Separating Geometry from Probability\\
in the Analysis of Generalization}

\author{Maxim Raginsky\thanks{University of Illinois, Urbana, IL 61801, USA}\\ \href{mailto:maxim@illinois.edu}{maxim@illinois.edu} \and Benjamin Recht\thanks{University of California, Berkeley, CA 94720, USA}\\\href{mailto:brecht@berkeley.edu}{brecht@berkeley.edu}}
\date{}
\maketitle

\begin{abstract} The goal of machine learning is to find models that minimize prediction error on data that has not yet been seen. Its operational paradigm assumes access to a dataset $S$ and articulates a scheme for evaluating how well a given model performs on an arbitrary sample. The sample can be $S$ (in which case we speak of ``in-sample'' performance) or some entirely new $S'$ (in which case we speak of ``out-of-sample'' performance). Traditional analysis of generalization assumes that both in- and out-of-sample data are i.i.d.\ draws from an infinite population. However, these probabilistic assumptions cannot be verified even in principle. This paper presents an alternative view of generalization through the lens of sensitivity analysis of solutions of optimization problems to perturbations in the problem data. Under this framework, generalization bounds are obtained by purely deterministic means and take the form of variational principles that relate in-sample and out-of-sample evaluations through an error term that quantifies how close out-of-sample data are to in-sample data. Statistical assumptions can then be used \textit{ex post} to characterize the situations when this error term is small (either on average or with high probability). \end{abstract}

\section{Introduction}
The colloquial explanation for the observed performance of machine learning algorithms is grounded in probabilistic assumptions regarding the data used to train, validate, and ultimately apply the learned models. Data is generated by sampling from a probability distribution. That probability distribution is the same---or at least very close to the same---over time. A model ``generalizes'' if it accurately predicts new samples from this distribution after being trained to predict data previously sampled from this distribution. Even if such probabilistic assumptions and the underlying theoretical frameworks (either frequentist or Bayesian) are not articulated explicitly, they are tacitly invoked whenever terms like ``out-of-distribution" are used as shorthand for scenarios when the learned models are applied in circumstances that had not been adequately covered by the training and the validation datasets. The ``out-of-distribution" designation is called on to do explanatory work both for poor performance of the model and for good performance. In the former case, one talks of ``overfitting to the training data" or of the imperative for scaling; in the latter case, the learned model is inferred to be capable of ``domain adaptation" or ``transfer learning." This terminological drift conflates empirically verifiable claims (e.g., statements regarding sample sizes, specific choices of hyperparameters or random seeds, observed differences between in-sample vs.\ out-of-sample performance or other sample statistics, software implementation choices, etc.) and claims about theoretical entities and constructs whose validity cannot be empirically tested or verified even in principle (e.g., assumptions of independent and identically distributed samples from a fixed but unknown probability distribution). 

In this paper, we present a non-stochastic theory of generalization. More specifically, we show that many common probabilistic arguments about generalization can be derived as expected values of deterministic perturbations of the algorithms used to train machine learning models. We adopt a view of machine learning that has been explored both by \citet{bousquet2002stability} and \citet{Kuchibhotla_2019}: Machine learning can be thought of as \emph{parametric programming}, where the \emph{data} is the perturbation parameter. An algorithm generalizes when the objective is stable under ``reasonable'' perturbations of the data. Unlike prior work applying parametric programming to statistical learning theory, we derive \emph{deterministic} bounds on the error of a model trained on one dataset and evaluated on another. These bounds are often implicitly computed as part of standard probabilistic arguments about machine learning generalization. Here, we show that they can be decoupled from the probability and provide new insights into deterministic properties of machine learning algorithms. [We should, however, point out that certain methods for deriving generalization bounds in the statistical setting do not admit such a straightforward deterministic recasting. These include, for example, PAC-Bayes bounds \citep{Catoni_2007} and uniform convergence bounds that capitalize on symmetry/exchangeability inherent in certain explicitly probabilistic quantities like the Rademacher complexity \citep{Koltchinskii_2011}.

We note that our approach is different from the standard non-stochastic theory of online learning \citep{PLG_book}. Online learning also makes no probabilistic assumptions about how data is generated. Online algorithms sequentially assemble prediction functions by adding one data point at a time. At each time, a new prediction function is chosen in a model class to predict the next sample. The quality of online algorithms is measured in terms of regret. Regret contains two components, the first is the average error over time of the error made using the first $t$ samples to predict the next ($t+1$) sample. The second is the best prediction possible in the same model class assuming all of the data has been seen. Regret is the difference of these two averages.

Regret measures something different from out-of-sample error, as it is a series of predictors at each time step. The \emph{expected value} of regret, however, is often a generalization error, because now we can assume each term in the regret average is simply the error made when $t$ independent samples are used to predict another identically distributed example. This technique, called \emph{online to batch} conversion \citep{Cesa_Bianchi_etal_2004}, is similar to the methods we use to convert our deterministic inequalities about batch learning into probabilistic generalization bounds.

Moreover, while sensitivity analysis also motivating the groundbreaking work on stability by~\citet{bousquet2002stability}, their work immediately applies parametric programming to stochastic optimization. While optimization researchers have used this technique to analyze stochastic problems (see, for example, \citet{shapiro1994quantitative}) they have just as widely, if not more widely, applied the framework to deterministic optimization problems. Interestingly, \citet{shalev2010learnability} show that in many probabilistic models of learning, stability, and hence the sorts of bounds we study in this work, are necessary and sufficient for generalization. We note that modeling assumptions about data generating distributions can be added at the end of the analysis, rather than at the beginning. Indeed, we will discuss throughout how the expected value of our deterministic bounds under such data generating processes match optimal bounds in probabilistic generalization theory.

All of our work proceeds through perturbation analysis of the optimality conditions of optimization problems that arise in machine learning. Adopting the terminology of \citet{Shapiro_1992}, we will refer to such quantitative comparisons as \textit{variational principles}. Each section explores different ways to derive variational principles in machine learning, linking together many lines of prior work. In Section~\ref{sec:interpolation} we revisit variational principles of function interpolation, first derived in the 1950s. We then derive variational principles from duality of convex optimization in Section~\ref{sec:duality}, leading to novel tight bounds on the leave-one-out error of hard margin support vector machines. In Section~\ref{sec:quadratic}, we apply geometric techniques from parametric programming, deriving new generalization bounds and connections to ``basic inequalities'' in mathematical statistics  \citep{VanDeGeer_2000,Paik_2025}. Finally, in Section~\ref{sec:evaluations}, we show that, under convexity assumptions on the model evaluation functional, it is possible to construct purely deterministic versions of localization arguments originally developed by \citet{Hjort_1993} and then refined by \citet{Koltchinskii_2006}. 

\section{The basic setup}

For simplicity, we exclusively focus on a standard supervised learning framework. Here, machine learning systems operate with three basic types of entities: data sets (or samples), models, and evaluations (or benchmarks). Data sets are finite subsets of a Cartesian product $\cZ = \cX \times \cY$, with $\cX$ comprised of covariates to be leveraged to predict elements of $\cY$. Models are functions from $\cX$ to $\cY$. A \textit{model class} is a subset $\cF$ of a Banach space ${\mathfrak B}$ with norm $\|\cdot\|$.

Evaluations are quantitative criteria for assessing the fit or misfit of a given model to a given sample. An example is the average prediction error. Let a sample $S = \{(x_1,y_1),\dots,(x_n,y_n)\}$ and a candidate prediction function $f$ be given, and let $\ell$ be a \emph{loss} function that grows monotonically with misfit. Then the average prediction error is
\begin{align*}
	L(S,f) = \frac{1}{n}\sum^n_{i=1}\ell(y_i,f(x_i))\,.
\end{align*}
An evaluation is thus is a mapping $L$ that takes a sample $S \subset \cZ$ and a model $f \in \cF$ and returns a nonnegative real number $L(S,f)$. We assume that, for each sample $S$, the set
\begin{align*}
\cM(S,0) := \{ f \in \cF: L(S,f) \le L(S,g), \, \forall g \in \cF \}
\end{align*}
of the minimizers of $L(\cdot,S)$ over $\cF$ is nonempty. This implies, in particular, that for each $\eps > 0$ the set
\begin{align*}
\cM(S,\eps) := \{ f \in \cF: L(S,f) \le L(S,g) + \eps, \, \forall g \in \cF \}
\end{align*} 
of the $\eps$-minimizers of $L(\cdot,S)$ over $\cF$ is nonempty as well. A learning algorithm $\cA$ is a rule for assigning models to samples, $f = \cA(S)$. We say that $\cA$ \textit{interpolates} the sample $S$ if $L(S,\cA(S)) = 0$. In machine learning terminology, such an algorithm achieves zero training error on $S$. Another class of algorithms that will be of interest are the ones that have the property $\cA(S) \in \cM(S,\eps)$ for some $\eps \ge 0$. Such an algorithm may or may not interpolate $S$ (e.g., it will be interpolating if $\cA(S) \in \cM(S,0)$ and $\min_{f \in \cF}L(S,f) = 0$).

\paragraph{Generalization.} Statements about generalization are, at their core, about the following question: Given $\Sin$ and $f_{{\mathsf{in}}} = \cA(\Sin)$, what is the best a priori prediction we can make about $L(\Sout,f_{{\mathsf{in}}})$ for an arbitrary $\Sout$? Relatedly, how does $\cA(\Sin)$ perform on $\Sout$ compared to $\cA(\Sout)$?  A \emph{generalization bound} is an explicit prediction of out-of-sample evaluation in terms of in-sample evaluation. That is, if we denote in-sample data and out-of-sample data by $\Sin$ and $\Sout$ explicitly, then a generalization bound is
\begin{align}\label{eq:generalization}
	L(\Sout,f) \leq L(\Sin,f) + {\rm dissim}(\Sout,\Sin).
\end{align}
Here, ${\rm dissim}(\cdot,\cdot)$ is some quantified notion of dissimilarity between the samples.  In order to make any claims about out-of-sample behavior, we must make some inductive assumptions about what the out-of-sample data \emph{is}. Typically in machine learning, the starting assumption is about how in- and out-of-sample data are related. To relate in-sample and out-of-sample evaluations, we will treat $L(\Sout,\cdot)$ as a \textit{perturbation} of $L(\Sin,\cdot)$. In this way, the notion that ``the future is similar to the past" can be made quantitative by positing that this perturbation is small in the sense that ${\rm dissim}(\Sout,\Sin)$ is small.

\paragraph{Variational principles.} As we wrote in the introduction, the problem of minimizing $L(S,f)$ over $f \in \cF$ for a given $S$, either exactly or approximately, is an instance of \textit{parametric programming}, where the role of the parameter is played by the sample $S$. Perturbation analysis of optimization problems \citep{Bonnans_2000} is concerned with the sensitivity of the solutions of parametric programs to perturbations of the parameter. Denoting by $f_S$ an exact or an approximate minimizer of $L(S,f)$, one can obtain either bounds on the norm $\| f_S - f_{S'} \|$ or on the difference of values $L(S,f_S) - L(S',f_{S'})$. In the context of machine learning, $S = \Sin$, $S' = \Sout$, and we seek to compare $\cA(\Sin)$ and $\cA(\Sout)$ for a given learning algorithm $\cA$. 

If we follow the usual sensitivity analysis route, we will need to examine either the norm $\| \cA(\Sin) - \cA(\Sout) \|$ or the difference of values $L(\Sin,\cA(\Sin)) - L(\Sout,\cA(\Sout))$. However, for the purposes of analyzing generalization, we will also need to compare $L(\Sout,\cA(\Sin))$ to $L(\Sin,\cA(\Sin))$. This latter comparison encodes the ``arrow of time'' in the setting of machine learning, where we view the in-sample $\Sin$ as having been available in the past, $\cA(\Sin)$ as the model that was constructed on the basis of past data, the out-sample $\Sout$ as a potential future sample, and $L(\Sout,\cA(\Sin))$ as the evaluation of the model learned from past data on future data. 

Following \citet{Shapiro_1992}, we will refer to such quantitative comparisons as \textit{variational principles}. This name reflects the basic idea underlying the derivation of such results in the vein of optimality conditions in the classical calculus of variations: We consider small variations around $\cA(\Sin)$ and use the necessary conditions for optimality together with the properties of $f \mapsto L(S,f)$ to quantify the corresponding variations around $L(\Sin,\cA(\Sin))$. In mathematical statistics, similar ideas enter under the name of ``basic inequalities" \citep{VanDeGeer_2000,Paik_2025}.

\section{Variational principles for interpolation}\label{sec:interpolation}

In this section, we focus on interpolating algorithms $\cA$, i.e., the ones that achieve $L(S,\cA(S)) = 0$ for a given sample $S$. There is extensive literature on interpolation procedures (see, e.g., the text of \citet{Wendland_2005}) that aims to understand how accurately an interpolation procedure can characterize an unknown object based on finitely many measurements. The notion of measurement can be quite broad, but things can always be framed in such a way that the unknown object is a function $f : \cX \to \Reals$ on some ground set $\cX$ and the measurements are point evaluations of the form $f \mapsto f(x)$ for a fixed $x \in \cX$. Generalizing this, we can assume that the unknown $f$ is an element of a normed linear space $\cF$ and that the measurements are in a one-to-one correspondence with the elements of the dual space $\cF^*$ of continuous linear functionals on $\cF$. Thus, we are faced with the problem of (\textit{optimal}) \textit{recovery} of an unknown $f \in \cF$ from some set from a finite collection of linear measurements $S = \{(u_i,y_i) : i = 1, \dots, n\}$, where $u_i \in \cF^*$ and $y_i = \langle u_i, f \rangle$ \citep{Golomb_Weinberger_1958,Micchelli_1977}. We will adopt this framework to the setting of interpolating algorithms in machine learning.

Assume that the model class $\cF$ is itself a Banach space with Banach norm $\|\cdot\|_\cF$ and take $\cZ = \cF^* \times \Reals$. Thus, the elements of $\cZ$ are ordered pairs $(u,y)$, where $u$ is a continuous linear functional on $\cF$ and $y$ is a real number. For $S = \{ (u_i, y_i) : i = 1, \dots, n \} \subset \cZ$, we define the quadratic loss
\begin{align*}
L(S,f) := \frac{1}{n}\sum^n_{i=1} (y_i - \langle u_i,f \rangle )^2.
\end{align*}
Let us begin by analyzing algorithms that implement minimum norm interpolation: 
\begin{align}\label{eq:minimum-norm-interpolation}
\begin{split}
\text{minimize } & \|f\|_\cF \\
\text{subject to } & \langle u, f \rangle = y~~~\text{for all}~(u,y) \in S\,.
\end{split}
\end{align}
If a solution exists, we will denote it by $\hat{f}_S$. Existence and uniqueness of solutions can be established under reasonable regularity assumptions---for example, if $\cF$ is reflexive and the $u$'s entering into $S$ are linearly independent \citep{Wang_2021}. The out-of-sample properties of such minimum norm interpolators have been investigated extensively in the literature on optimal interpolation and optimal recovery \citep{Golomb_Weinberger_1958,Sard_1967,Sard_1973,Micchelli_1977,Traub_1988,Wendland_2005}. We briefly review and extend these results here.

Given a finite set $U = \{u_1,\dots,u_n\} \subset \cF^*$, define the linear \textit{sampling operator} $T_U : \cF \to \Reals^{n}$:
\begin{align*}
T_U f := \big(\langle u_1,f \rangle,\dots,\langle u_n, f\rangle \big)^\trn.
\end{align*}
For a sample (dataset) $S = \{(u_i,y_i) : i = 1,\dots,n\} \subset \cZ$, let $U(S) := \{u_1,\dots,u_n\}$. We will measure the dissimilarity between two datasets $\Sin$ and $\Sout$ in terms of how much variation the values of $f \mapsto \langle u, f \rangle$ exhibit on $u \in U(\Sout)$ when restricted to those $f$ that are constrained to interpolate $\Sin$. Specifically, we define
\begin{align}\label{eq:D_in_out}
D(\Sin,\Sout) := \frac{1}{\sqrt{|\Sout|}} \sup_{f \in \ker T_{U(\Sin)} \setminus \{0\}} \frac{ \| T_{U(\Sout)} f\|_2}{\|f\|_{\cF}},
\end{align}
where the norm in the numerator is the $\ell^2$ (Euclidean) norm on $\Reals^{|U(\Sout)|}$. The square of this quantity is equal to the value of the optimization problem
\begin{align}\label{eq:prob-D-in-out}
\begin{split}
\text{maximize } & \frac{1}{|\Sout|} \sum_{u \in U(\Sout)}  \langle u,f \rangle^2\\
\text{subject to } & \|f\|^2_\cF=1\\
& \langle v, f \rangle = 0 ~~~ \text{for all}~v \in U(\Sin)
\end{split}
\end{align}
With this definition, we can prove the following bound on the out-of-sample performance of the solution to Problem~\eqref{eq:minimum-norm-interpolation}:

\begin{theorem}\label{thm:GW59} Let two finite samples $\Sin$ and $\Sout$ be given, and let $S := \Sin \cup \Sout$. Assume that minimum-norm interpolating solutions $\hat{f}_S$ and $\hat{f}_{\Sin}$ exist. Then
\begin{align}
L(\Sout,\hat{f}_{\Sin}) &\le D^2(\Sin,\Sout) \|\hat{f}_{S} - \hat{f}_{\Sin}\|^2_\cF \label{eq:error_interp}
\end{align}
Moreover, if $(\cF,\|\cdot\|_\cF)$ is a Hilbert space, then 
\begin{align}
L(\Sout,\hat{f}_{\Sin}) &\leq D^2(\Sin,\Sout) (\|\hat{f}_S\|^2_\cF - \|\hat{f}_{\Sin}\|^2_\cF).\label{eq:error_interp-v2}
\end{align}
\end{theorem}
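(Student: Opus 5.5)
The plan is to apply the definition of $D(\Sin,\Sout)$ in \eqref{eq:D_in_out} to the single test function $g := \hat{f}_S - \hat{f}_{\Sin}$. First, $g$ lies in $\ker T_{U(\Sin)}$. Indeed, $S \supseteq \Sin$, so both $\hat{f}_S$ and $\hat{f}_{\Sin}$ satisfy $\langle v, f\rangle = y$ for every $(v,y) \in \Sin$, and their difference is therefore annihilated by every $v \in U(\Sin)$. Second, $\hat{f}_S$ interpolates $\Sout$, so for each $(u,y) \in \Sout$ we have
\begin{align*}
y - \langle u, \hat{f}_{\Sin}\rangle = \langle u, \hat{f}_S\rangle - \langle u,\hat{f}_{\Sin}\rangle = \langle u, g\rangle .
\end{align*}
Squaring and averaging gives $L(\Sout,\hat{f}_{\Sin}) = \frac{1}{|\Sout|}\|T_{U(\Sout)} g\|_2^2$.

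If $g = 0$, both sides of \eqref{eq:error_interp} vanish. Otherwise the supremum defining $D$ is at least the ratio at $g$, so $\frac{1}{|\Sout|}\|T_{U(\Sout)}g\|_2^2 \le D^2(\Sin,\Sout)\|g\|_\cF^2$, which is exactly \eqref{eq:error_interp}. One bookkeeping point needs checking here. If an element of $U(\Sout)$ also belongs to $U(\Sin)$, its residual is zero and it contributes zero to both sides, so the identification of $L(\Sout,\cdot)$ with $\frac{1}{|\Sout|}\|T_{U(\Sout)}\cdot\|_2^2$ is consistent. I am also assuming $\Sout$ is a consistent sample, so that $\hat{f}_S$ exists; this is part of the hypothesis.

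For \eqref{eq:error_interp-v2}, it suffices to show $\|\hat{f}_S - \hat{f}_{\Sin}\|_\cF^2 = \|\hat{f}_S\|_\cF^2 - \|\hat{f}_{\Sin}\|_\cF^2$ in a Hilbert space, i.e., $\langle \hat{f}_{\Sin}, g\rangle_\cF = 0$. This is the step that needs an actual argument, and I would use the variational characterization of the minimum-norm interpolant. The feasible set for $\Sin$ is the affine set $\hat{f}_{\Sin} + \ker T_{U(\Sin)}$, and $\hat{f}_{\Sin}$ minimizes $\|\cdot\|_\cF^2$ over it. For any $h \in \ker T_{U(\Sin)}$ and $t\in\Reals$, we have $\|\hat{f}_{\Sin}+th\|^2 \ge \|\hat{f}_{\Sin}\|^2$. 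Expanding gives $2t\langle \hat{f}_{\Sin},h\rangle + t^2\|h\|^2 \ge 0$ for all $t$, which forces $\langle \hat{f}_{\Sin},h\rangle = 0$. Taking $h = g$, which lies in the kernel by the first paragraph, the Pythagorean identity gives $\|\hat{f}_S\|^2 = \|\hat{f}_{\Sin}\|^2 + \|g\|^2$. Substituting this into \eqref{eq:error_interp} yields \eqref{eq:error_interp-v2}.

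I expect no serious obstacle. The only delicate points are the orthogonality step, which needs the Hilbert structure and which is exactly why the Banach-space bound is stated with $\|\hat{f}_S-\hat{f}_{\Sin}\|$ rather than a difference of squared norms, and the degenerate case $g=0$ in the supremum.
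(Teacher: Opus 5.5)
Your proposal is correct and follows the paper's proof essentially step for step. Both arguments place $\hat{f}_S - \hat{f}_{\Sin}$ in $\ker T_{U(\Sin)}$, write $L(\Sout,\hat{f}_{\Sin}) = |\Sout|^{-1}\|T_{U(\Sout)}(\hat{f}_S - \hat{f}_{\Sin})\|_2^2$, bound this using the definition of $D(\Sin,\Sout)$, and in the Hilbert case use $\hat{f}_{\Sin} \perp \ker T_{U(\Sin)}$ to get the Pythagorean identity. Your first-order derivation of that orthogonality (perturbing $\hat{f}_{\Sin}$ by $th$ with $h \in \ker T_{U(\Sin)}$) is a cleaner justification than the paper's appeal to the projection lemma, whose relevant projection is that of $0$ onto the affine set $\hat{f}_{\Sin} + \ker T_{U(\Sin)}$.
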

\begin{proof} Let $f$ be any element of $\cF$ that interpolates $S$. Then it also interpolates $\Sin$. Therefore $f - \hat{f}_{\Sin}$ belongs to the kernel of $T_{U(\Sin)}$. Applying the definition of $D(\Sin,\Sout)$, we have
\begin{align*}
L(\Sout,\hat{f}_{\Sin}) &= \frac{1}{|\Sout|} \|
T_{U(\Sout)}f - T_{U(\Sout)}\hat{f}_{\Sin} \|^2_2 \\
&= \frac{1}{|\Sout|} \|T_{U(\Sout)}  (f - \hat{f}_{\Sin}) \|^2_2 \\
&\le D^2(\Sin,\Sout) \| f - \hat{f}_{\Sin}\|^2_\cF.
\end{align*}
Taking $f = \hat{f}_S$, we obtain the first bound.

Now consider the case when $(\cF,\|\cdot\|_\cF)$ is a Hilbert space. Then, given any $f$ that interpolates $S$, $\hat{f}_{\Sin}$ solves the minimization problem
\begin{align*}
    \begin{split}
    \text{minimize } & \| f - g \|_\cF \\
    \text{subject to } & \langle u,g \rangle = y \text{ for all } (u,y) \in \Sin
    \end{split}
\end{align*}
Therefore, by the projection lemma, $f - \hat{f}_{\Sin}$ is orthogonal to $\hat{f}_{\Sin}$, which allows us to write
\begin{align*}
\| f - \hat{f}_{\Sin} \|^2_\cF &= \| f \|^2_\cF - 2 \langle f, \hat{f}_{\Sin} \rangle + \| \hat{f}_{\Sin}\|^2_\cF \\
&= \| f \|^2_\cF - \| \hat{f}_{\Sin}\|^2_\cF.
\end{align*}
Again, taking $f = \hat{f}_S$, we arrive at the claimed inequality.
\end{proof}

\noindent The above result has appeared in various works, starting with  \citet{Golomb_Weinberger_1958} and in two follow-up papers by \citet{Sard_1967,Sard_1973}. Our first inequality in Eq.~\eqref{eq:error_interp} bounds the out-of-sample error in terms of $D(\Sin,\Sout)$ and the minimum norm perturbation of $\hat{f}_{\Sin}$ required to interpolate $S$. The second inequality in Eq.~\eqref{eq:error_interp-v2} bounds the out-of-sample error in terms of the increase in norm when interpolating $S$ versus $\Sin$. In reproducing kernel Hilbert spaces,  \citet{LiangRecht23} showed that, when $\Sout$ is a singleton, Eq.~\eqref{eq:error_interp-v2} actually holds with \emph{equality}. When $|\Sout| > 1$, we can still show that \eqref{eq:error_interp-v2} is sharp:

\begin{theorem} Suppose $(\cF,\|\cdot\|_\cF)$ is a Hilbert space. Let $\Sin,\Sout,S$ be given as in Theorem~\ref{thm:GW59}, and assume that minimum-norm interpolating solutions $\hat{f}_S$ and $\hat{f}_{\Sin}$ exist. Then, for each $\eps > 0$ and each $r \ge \| \hat{f}_{\Sin}\|_\cF$, there exists some $\tilde{f} \in \cF$, such that $L(\Sin,\tilde{f}) = 0$, $\|\tilde{f}\|_\cF \le r$, and
\begin{align*}
L(\Sout,\tilde{f}) &\ge D^2(\Sin,\Sout) \| \tilde{f} -  \hat{f}_{\Sin} \|^2_\cF - \eps.
\end{align*}
\end{theorem}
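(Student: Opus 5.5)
The plan is to construct $\tilde f$ explicitly as a perturbation of $\hat f_{\Sin}$ along a near-extremal direction for the supremum defining $D(\Sin,\Sout)$ in \eqref{eq:D_in_out}. Write $K := \ker T_{U(\Sin)}$. If $D(\Sin,\Sout)=0$, take $\tilde f = \hat f_{\Sin}$: it interpolates $\Sin$, satisfies $\|\tilde f\|_\cF\le r$, and the required inequality reads $L(\Sout,\tilde f)\ge -\eps$, which is trivial. So assume $D:=D(\Sin,\Sout)>0$.

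By definition of the supremum, for any $\delta>0$ there is a unit vector $h\in K$ with
\[
\frac{1}{|\Sout|}\|T_{U(\Sout)}h\|_2^2 \ge D^2-\delta .
\]
Set $\tilde f := \hat f_{\Sin} + t h$ with a scalar $t$ to be chosen. Since $h\in K$, we have $L(\Sin,\tilde f)=0$. Because $\hat f_{\Sin}$ is the minimum-norm interpolant in a Hilbert space, it is orthogonal to $K$ (this is the projection-lemma fact already used in the proof of Theorem~\ref{thm:GW59}). Hence
\[
\|\tilde f\|_\cF^2=\|\hat f_{\Sin}\|_\cF^2+t^2,
\]
and we may take $|t|\le \sqrt{r^2-\|\hat f_{\Sin}\|^2_\cF}$. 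We also have $\|\tilde f-\hat f_{\Sin}\|_\cF^2=t^2$.

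Next expand the out-of-sample loss. Let $a:=T_{U(\Sout)}\hat f_{\Sin}-y_{\mathrm{out}}$ (the residual vector on $\Sout$) and $b:=T_{U(\Sout)}h$. Then
\[
|\Sout|\,L(\Sout,\tilde f)=\|a+tb\|_2^2=\|a\|_2^2+2t\langle a,b\rangle+t^2\|b\|_2^2 .
\]
Replacing $h$ by $-h$ if necessary (which keeps $h\in K$, keeps it a unit vector, and leaves $\|b\|_2$ unchanged), we can assume $t\langle a,b\rangle\ge 0$. Then
\[
L(\Sout,\tilde f)\ge \frac{t^2}{|\Sout|}\|b\|_2^2\ge t^2(D^2-\delta)=D^2\|\tilde f-\hat f_{\Sin}\|^2_\cF-\delta t^2 .
\]
Choosing $\delta$ so that $\delta t^2\le\eps$ gives the claim; for example $\delta=\eps/\max(r^2,1)$ works, since $t^2\le r^2$.

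There is one delicate point. When $r=\|\hat f_{\Sin}\|_\cF$, the norm constraint forces $t=0$. Then $\tilde f=\hat f_{\Sin}$ and the bound holds trivially because its right-hand side is $-\eps$. So any admissible $t$ works, and we can take the largest one to make the statement non-vacuous. The main step to check carefully is the orthogonality $\hat f_{\Sin}\perp K$, which underlies both the norm computation and the guarantee $\|\tilde f\|_\cF\le r$; everything else is elementary. No attainment of the supremum is needed, which is why the statement carries the slack $\eps$.
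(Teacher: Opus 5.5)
Your proof is correct, but it is not the paper's construction.

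The paper perturbs the full interpolant: $\tilde f := \hat f_S + cg$, with $c^2 = r^2-\|\hat f_{\Sin}\|_\cF^2$ and $g$ a near-extremal unit vector of $\ker T_{U(\Sin)}$. Since $\hat f_S$ interpolates $\Sout$, this gives $L(\Sout,\tilde f) = c^2\|T_{U(\Sout)}g\|_2^2/|\Sout|$ exactly, with no cross term and no sign choice. The price is the norm constraint. The paper needs $\|\tilde f-\hat f_{\Sin}\|_\cF\le c$ and asserts it via $\|\tilde f-\hat f_S\|_\cF^2\ge\|\tilde f-\hat f_{\Sin}\|_\cF^2$. In fact $\|\tilde f-\hat f_{\Sin}\|_\cF^2 = c^2+\|\hat f_S-\hat f_{\Sin}\|_\cF^2+2c\langle \hat f_S-\hat f_{\Sin},g\rangle$, which can exceed $c^2$. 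For instance, take $\cF=\Reals^3$, $\Sin=\{(e_3,0)\}$, $\Sout=\{(e_1,1),(e_2,0)\}$. Then $\hat f_{\Sin}=0$ and $\hat f_S=e_1$. For $0<r<1/2$, every $\hat f_S+rg$ with $\|g\|_\cF=1$ has norm at least $1-r>r$, so the paper's $\tilde f$ violates $\|\tilde f\|_\cF\le r$ whatever $g$ is chosen.

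Centering at $\hat f_{\Sin}$, as you do, makes the identity $\|\tilde f\|_\cF^2=\|\hat f_{\Sin}\|_\cF^2+t^2$ exact by orthogonality. Your sign flip of $h$ then disposes of the cross term $2t\langle a,b\rangle$ in the loss. So your route is the one that actually delivers a nontrivial witness for $r>\|\hat f_{\Sin}\|_\cF$, and it never uses $\hat f_S$ at all.

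Your closing remark is also apt. The choice $\tilde f=\hat f_{\Sin}$, which the paper uses only for $r=\|\hat f_{\Sin}\|_\cF$, already satisfies the statement for every $r$. The real content is therefore a witness with $\|\tilde f-\hat f_{\Sin}\|_\cF^2=r^2-\|\hat f_{\Sin}\|_\cF^2$, which your largest admissible $t$ provides.
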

\begin{proof} We follow the argument of \citet[Lemma~4]{Sard_1973}. Suppose first that $r = \| \hat{f}_{\Sin} \|_\cF$. Then we can take $\tilde{f} = \hat{f}_{\Sin}$ and, trivially,
\begin{align*}
L(\Sout,\tilde{f}) &\ge 0 = D^2(\Sin,\Sout) \| \tilde{f} - \hat{f}_{\Sin} \|^2_\cF \ge D^2(\Sin,\Sout) \| \tilde{f} - \hat{f}_{\Sin} \|^2_\cF - \eps.
\end{align*}
Suppose now that $r > \| \hat{f}_{\Sin}\|_\cF$. Let $c^2 := r^2 - \| \hat{f}_{\Sin}\|^2_\cF$. By the definition of $D(\Sin,\Sout)$, there exists some $g \in \ker T_{U(\Sin)}$ such that $\|g\|_\cF = 1$ and 
\begin{align*}
\frac{\| T_{U(\Sout)}g \|^2_2}{|\Sout|} \ge D^2(\Sin,\Sout) - \frac{\eps}{c^2}.
\end{align*}
If we take $\tilde{f} := \hat{f}_S + cg$, then $\tilde{f}$ evidently interpolates $\Sin$ and, since $\hat{f}_{\Sin}$ is a minimum-norm interpolator of $\Sin$, $\tilde{f} - \hat{f}_{\Sin}$ is orthogonal to $\hat{f}_{\Sin}$. Hence
\begin{align*}
r^2 - \| \hat{f}_{\Sin} \|^2_\cF &= c^2 = \| \tilde{f} - \hat{f}_{S}\|^2_\cF \ge \| \tilde{f} - \hat{f}_{\Sin} \|^2_\cF = \| \tilde{f} \|^2_\cF - \| \hat{f}_{\Sin} \|^2_\cF,
\end{align*}
which implies that $\|\tilde{f}\|_\cF \le r$. Moreover, 
\begin{align*}
L(\Sout,\tilde{f}) &= \frac{1}{|\Sout|} \| T_{U(\Sout)}\tilde{f} - T_{U(\Sout)}\hat{f}_S \|^2_2 \\
&= c^2 \frac{ \| T_{U(\Sout)}g \|^2_2}{|\Sout|} \\
&\ge c^2 \left(D^2(\Sin,\Sout) - \frac{\eps}{c^2}\right) \\
&= c^2D^2(\Sin,\Sout) - \eps \\
&= D^2(\Sin,\Sout) \left(\| \tilde{f} \|^2_\cF - \| \hat{f}_{\Sin} \|^2_\cF\right) - \eps\\
&= D^2(\Sin,\Sout) \|\tilde{f} - \hat{f}_{\Sin}\|^2_\cF - \eps.
\end{align*}
This completes the proof.
\end{proof}

\paragraph{Computation of $D(\Sin,\Sout)$.} In certain cases, the quantity $D(\Sin,\Sout)$ can be computed explicitly. For example, suppose that $\Sout$ is a singleton, i.e., $\Sout = \{(u_{n+1},y_{n+1})\}$ and $u_{n+1} \not\in U(\Sin) = \{u_1,\dots,u_n\}$. We have (see \citet[Sec.~6.6]{Luenberger_1969_opt})
\begin{align*}
\ker T_{U(\Sin)} = \left({\rm ran}\, T^*_{U(\Sin)}\right)^\perp,
\end{align*}
where $T^*_{U(\Sin)} : \Reals^n \to \cF^*$ is the adjoint of $T_{U(\Sin)}$ that maps any $\xi = (\xi_1,\dots,\xi_n)^\trn \in \Reals^n$ to
\begin{align*}
T^*_{U(\Sin)}\xi = \sum^n_{i=1} \xi_i u_i
\end{align*}
and where, for a linear subspace $L$ of $\cF^*$, $L^\perp := \{ f \in \cF: \langle u,f\rangle = 0, u \in L \}$. Moreover, 
\begin{align*}
{\rm ran}\, T^*_{U(\Sin)} = {\rm span}\, U(\Sin) = {\rm span} \{u_1,\dots,u_n\}.
\end{align*}
Assume $\cF$ is reflexive. Then standard duality arguments (see \citet[Sec.~5.8]{Luenberger_1969_opt}) lead to
\begin{align*}
\sup_{{f \in \ker T_{U(\Sin)}} \atop {\|f\|_\cF = 1}} \| T_{U(\Sout)}f \|_2 &= \sup_{f \in \ker T_{U(\Sin)} \atop \|f\|_\cF = 1} \langle u_{n+1}, f \rangle\\
&= \inf_{v \in {\rm ran}\, T^*_{U(\Sin)}} \| u_{n+1} - v \|_\cF
\end{align*}
which implies that
\begin{align*}
D(\Sin,\Sout) &= D(\{(u_i,y_i)\}^n_{i=1}, \{(u_{n+1},y_{n+1})\})  \\
&= {\rm dist}(u_{n+1}, {\rm span}\{u_1,\dots,u_{n}\}).
\end{align*}

As another example, consider the case when  $\cF$ is a reproducing kernel Hilbert space of functions on some set $\cX$ with embedding map $x \mapsto \varphi_x$ and kernel $K(x,y) = \langle \varphi_x, \varphi_y \rangle$. In this case, for each $x \in \cX$ the evaluation functional $f \mapsto f(x) = \langle \varphi_x, f \rangle$ is continuous, and we can equivalently consider data samples of the form $S = \{(x_i,y_i) : i = 1,\dots,n\} \subset \cX \times \Reals$.  Accordingly, we will use the notation $X(S) = \{x_1,\dots,x_n\} \subset \cX$ instead of $U(S) = \{\varphi_{x_1},\dots,\varphi_{x_n}\} \subset \cF$. In this setting, the value of~\eqref{eq:prob-D-in-out} can be computed by solving a generalized eigenvalue problem. To see this, let the samples
\begin{align*}
\Sin = \{(x_i,y_i) : i = 1,\dots,n \}, \qquad \Sout = \{ (\bar{x}_j, \bar{y}_j) : j = 1,\dots,m \}
\end{align*}
be given. Define the block matrices $K \in \Reals^{(n+m) \times (n+m)}$, $T_{{\mathsf{out}}} \in \Reals^{m \times (n+m)}$, and $T_{{\mathsf{in}}} \in \Reals^{n\times (n + m)}$ by
\begin{align*}
K &:= \begin{bmatrix}
K_{\ii} & K_{\io} \\
K_{\oi} & K_{\oo}
\end{bmatrix} \\
T_{{\mathsf{out}}} &:= \begin{bmatrix}
K_{\oi} & K_{\oo}
\end{bmatrix} \\
T_{{\mathsf{in}}} &:= \begin{bmatrix}
K_{\ii} & K_{\io}
\end{bmatrix}
\end{align*}
where $K_{\ii}$ has entries $K(x_i,x_j)$, $K_{\io}$ has entries $K(x_i,\bar{x}_j)$, etc. Then, by the representer theorem, the optimization problem~\eqref{eq:prob-D-in-out} is equivalent to the finite-dimensional problem
\begin{align}\label{eq:prob-D-in-out-rep}
\begin{split}
\text{maximize } & \frac{1}{m} \left\| T_{{\mathsf{out}}}\xi \right\|^2_2\\
\text{subject to } & \xi \in \Reals^{n+m} \\
& \xi^\trn K \xi = 1\\
& T_{{\mathsf{in}}}\xi = 0.
\end{split}
\end{align}
Assume that $K$ is nonsingular (this would be the case, for example, if the $x_i$'s and $\bar{x}_j$'s are all distinct and $K$ is a strictly positive definite kernel \citep{Sriperumbudur_2011}). Let $r := \dim \ker T_{{\mathsf{in}}}$ and let $U \Lambda U^\trn$, with $\Lambda = {\rm diag} (\lambda_1,\dots,\lambda_r)$, 
be the eigendecomposition of the $r \times r$ matrix $H := \Psi^\trn K^{-1/2} T^\trn_{{\mathsf{out}}} T_{{\mathsf{out}}}K^{-1/2} \Psi$, where $\Psi \in \Reals^{(n+m) \times r}$ is a matrix  whose columns form an orthonormal basis of $\ker T_{{\mathsf{in}}}$. (Note that the rank of $H$ cannot exceed $m$.)

\begin{theorem} The value of \eqref{eq:prob-D-in-out-rep} is equal to $\lambda_{\max}/m$, where $\lambda_{\max} = \max_{1 \le i \le r} \lambda_i$.
\end{theorem}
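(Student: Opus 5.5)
The plan is to reduce \eqref{eq:prob-D-in-out-rep} to a standard Rayleigh-quotient problem on a Euclidean unit sphere. This takes three steps: whiten the quadratic constraint $\xi^\trn K\xi = 1$ by a change of variables, parametrize the linear constraint by an orthonormal basis, and invoke the Courant--Fischer characterization of the largest eigenvalue. Since $K$ is symmetric positive definite (a nonsingular Gram matrix), $K^{1/2}$ and $K^{-1/2}$ exist and are symmetric and invertible. I substitute $\zeta := K^{1/2}\xi$, so that $\xi = K^{-1/2}\zeta$ is a bijection of $\Reals^{n+m}$. In these variables the constraints become $\|\zeta\|_2^2 = 1$ and $T_{\mathsf{in}}K^{-1/2}\zeta = 0$, and the objective becomes $\frac1m\|T_{\mathsf{out}}K^{-1/2}\zeta\|_2^2$.

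Next I parametrize the linear constraint. This is the step that needs care: in the whitened variables the constraint set is $\ker(T_{\mathsf{in}}K^{-1/2}) = K^{1/2}\ker T_{\mathsf{in}}$. The formula for $H$ has the form $\Psi^\trn K^{-1/2}(\cdots)K^{-1/2}\Psi$, so $\Psi$ must be read as an orthonormal basis of the constraint subspace in the $\zeta$-coordinates. Both subspaces have dimension $r$, because $K^{1/2}$ is invertible. I will make this identification explicit and then write $\zeta = \Psi w$ with $w \in \Reals^r$. Orthonormality of the columns of $\Psi$ gives $\|\zeta\|_2 = \|w\|_2$, and the map $w \mapsto \Psi w$ is a bijection from $\Reals^r$ onto the constraint subspace. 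The problem is therefore equivalent to
\begin{align*}
\text{maximize } \frac1m\, w^\trn H w \quad \text{subject to } \|w\|_2 = 1,\ w \in \Reals^r,
\end{align*}
with $H = \Psi^\trn K^{-1/2}T_{\mathsf{out}}^\trn T_{\mathsf{out}}K^{-1/2}\Psi$ exactly as defined.

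Finally, $H$ is symmetric positive semidefinite with eigendecomposition $U\Lambda U^\trn$. Setting $v = U^\trn w$ preserves the unit norm and gives $w^\trn H w = \sum_i \lambda_i v_i^2 \le \lambda_{\max}$. Equality holds at the eigenvector for $\lambda_{\max}$, so the supremum is attained and equals $\lambda_{\max}/m$. The remark that $\operatorname{rank} H \le m$ follows because $T_{\mathsf{out}}$ has $m$ rows, so $H$ is a product of matrices passing through $\Reals^m$. The degenerate case $r = 0$ (empty feasible set) should be excluded or handled by convention.

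The main obstacle is the bookkeeping in the second step: checking that the change of variables maps the feasible set of \eqref{eq:prob-D-in-out-rep} bijectively onto $\{\Psi w : \|w\|_2 = 1\}$, with $\Psi$ taken to span the correct subspace. I will address this first by verifying $T_{\mathsf{in}}\xi = 0 \iff \zeta \in \ker(T_{\mathsf{in}}K^{-1/2})$. Everything after that is the routine Rayleigh-quotient argument.
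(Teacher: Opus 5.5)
Your proposal is correct and is essentially the paper's proof. The paper makes the single substitution $\xi = K^{-1/2}\Psi U\eta$, which is your whitening, basis parametrization and rotation composed into one map, and then concludes from $\eta^\trn\Lambda\eta \le \lambda_{\max}\|\eta\|_2^2$, with equality at the top eigenvector.

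You are also right that $\Psi$ must be an orthonormal basis of $\ker(T_{\mathsf{in}}K^{-1/2}) = K^{1/2}\ker T_{\mathsf{in}}$. The paper's step $\ker T_{\mathsf{in}} = \mathrm{ran}\,K^{-1/2}\Psi U$ silently needs exactly this reading. With $\Psi$ spanning $\ker T_{\mathsf{in}}$ as literally written, both that step and the formula fail. For example, take $n=m=1$, $K_{\ii}=K_{\oo}=2$, $K_{\io}=K_{\oi}=1$: the true value is the Schur complement $K_{\oo}-K_{\oi}K_{\ii}^{-1}K_{\io} = 3/2$, whereas the literal $H$ equals $(6+3\sqrt{3})/10 \approx 1.12$.
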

\begin{proof} Since the columns of $\Psi$ form an orthonormal basis of $\ker S$, $U$ is an $r \times r$ orthogonal matrix, and $K$ is nonsingular, it follows that $\ker T_{{\mathsf{in}}} = {\rm ran}\, K^{-1/2} \Psi U$. Hence,
\begin{align*}
\sup \left\{ \| T_{{\mathsf{out}}}\xi \|^2_2 : \xi^\trn K \xi = 1, \xi \in \ker T_{{\mathsf{in}}} \right\} &= \sup \left\{ \| T_{{\mathsf{out}}}K^{-1/2}\Psi U \eta \|^2_2: \| \Psi U \eta \|^2_2 = 1, \eta \in \Reals^r \right\} \\
&= \sup \left\{ \| T_{{\mathsf{out}}}K^{-1/2}\Psi U \eta \|^2_2: \|  \eta \|^2_2 = 1, \eta \in \Reals^r \right\}.
\end{align*}
Moreover,
\begin{align*}
\| T_{{\mathsf{out}}}K^{-1/2} \Psi U \eta \|^2_2 &= \eta^\trn U^\trn H U \eta  \\
&= \eta^\trn U^\trn U\Lambda U^\trn U \eta \\
&= \eta^\trn \Lambda \eta \\
&\le \lambda_{\max} \| \eta \|^2_2,
\end{align*}
and the supremum is achieved by letting $\eta$ be the normalized top eigenvector of $H$.
\end{proof}

Finally, note that we can crudely upper bound $D$ when the data is bounded. If $\|u\|_{\cF^*}\leq R$ for all $u \in \Sout$ then
\begin{align*}
D(\Sin,\Sout)^2 &= \frac{1}{|\Sout|} \sup_{f \in \ker T_{U(\Sin)} \setminus \{0\}} \frac{ \| T_{U(\Sout)} f\|_2^2}{\|f\|_{\cF}^2}\\
&=\frac{1}{|\Sout|} \sup_{f \in \ker T_{U(\Sin)} \setminus \{0\}} \frac{ \sum_{j=1}^{|\Sout|} \langle f, u_i \rangle^2}{\|f\|_{\cF}^2}\\
&\leq \frac{1}{|\Sout|}  \sum_{j=1}^{|\Sout|} \|u_i\|_{\cF^*}^2 \leq R^2\,.
\end{align*}

\paragraph{From deterministic to stochastic bounds.} If we make the additional strong assumption that the data sets $\Sin$ an $\Sout$ are collections of i.i.d.\ samples from some distribution $\mu$, these bounds recover standard generalization bounds in learning theory. For example, we have the following simple corollary. The subsampling of $m$ samples in this statement is a matter of convenience to keep the proof as short and simple as possible.

\begin{corollary}
Let $\mu$ denote a data-generating distribution. Suppose $\cF$ is a Hilbert space and there is an $f_\star \in \cF$ that globally interpolates all inputs. Also assume $\|u\|_\cF\leq R$ almost surely. Let $S_m$ denote a set of $m$ samples from $\mu$ and $\hat{f}_m$ denote the function interpolating these samples. Let $\Sout$ be an independent set of samples of arbitrary size.  Then for all $n$,
\begin{align*}
\min_{1\leq m \leq n} {\mathbf E}[ L(\Sout,\hat{f}_{m}) ] \leq \frac{ \|f_\star\|_{\cF}^2 R^2 }{n}\,.
\end{align*}
Here, the expectation is with respect to $\Sout$ and $\Sin$.
\end{corollary}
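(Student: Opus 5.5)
We will use the Hilbert-space bound \eqref{eq:error_interp-v2} of Theorem~\ref{thm:GW59} together with the crude bound $D^2(\Sin,\Sout)\le R^2$, and then telescope, in the style of an online-to-batch conversion. Fix $n$ and draw an i.i.d.\ sequence $z_1,\dots,z_{n+1},\dots$ from $\mu$. Let $S_m=\{z_1,\dots,z_m\}$, with $S_0=\emptyset$ and $\hat f_0=0$, and let $\hat f_m$ be the minimum-norm interpolator of $S_m$.

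\emph{Step 1: one-sample bound.} Take $\Sin=S_{m-1}$ and the singleton $\{z_m\}$ as $\Sout$. Then $S=S_m$, and \eqref{eq:error_interp-v2} combined with $D^2\le R^2$ gives, deterministically,
\begin{align*}
L(\{z_m\},\hat f_{m-1}) \le R^2\bigl(\|\hat f_m\|_\cF^2-\|\hat f_{m-1}\|_\cF^2\bigr).
\end{align*}
Existence of the interpolators follows because $f_\star$ interpolates every sample. In a Hilbert space the minimum-norm interpolator is the projection of $0$ onto a nonempty closed affine set, so it exists.

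\emph{Step 2: telescope.} Summing the one-sample bound over $m=1,\dots,n$ gives
\begin{align*}
\sum_{m=1}^n L(\{z_m\},\hat f_{m-1})\le R^2\|\hat f_n\|_\cF^2\le R^2\|f_\star\|_\cF^2,
\end{align*}
where the last step uses that $f_\star$ interpolates $S_n$ and $\hat f_n$ has minimum norm among interpolators.

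\emph{Step 3: take expectations.} Since $z_m$ is independent of $S_{m-1}$, we have $\Ex[L(\{z_m\},\hat f_{m-1})]=\Ex_{z\sim\mu}\Ex[\ell(z;\hat f_{m-1})]$. This in turn equals $\Ex[L(\Sout,\hat f_{m-1})]$ for any independent $\Sout$ of arbitrary size, because $L(\Sout,\cdot)$ is an average of i.i.d.\ per-sample losses, each with the same conditional mean given $S_{m-1}$. Hence
\begin{align*}
\sum_{m=0}^{n-1}\Ex[L(\Sout,\hat f_m)]\le R^2\|f_\star\|_\cF^2,
\end{align*}
and the minimum over the $n$ indices is at most the average, namely $R^2\|f_\star\|_\cF^2/n$.

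\emph{Main obstacle: the indexing.} The telescoped sum naturally involves $m=0,\dots,n-1$, whereas the statement asks for $1\le m\le n$, and the $m=0$ term, $\Ex[\langle u,0\rangle-y]^2=\Ex y^2$, is not obviously controlled by the sum. I would first handle this by reindexing: run the same telescoping with $n+1$ samples, so that the sum over $m=0,\dots,n$ is at most $R^2\|f_\star\|^2$, and drop the nonnegative $m=0$ term. This yields $\sum_{m=1}^n\Ex[L(\Sout,\hat f_m)]\le R^2\|f_\star\|^2$, and the minimum over $1\le m\le n$ is again at most $R^2\|f_\star\|_\cF^2/n$. The remaining points to check are routine: measurability of $\hat f_m$, and that the norm constraint ``$\|u\|_\cF\le R$'' is read as the dual norm used in the crude bound on $D$, which in a Hilbert space is the same norm.
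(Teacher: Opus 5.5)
Your proposal is correct and follows the paper's proof essentially step for step. Both arguments apply \eqref{eq:error_interp-v2} with $D^2\le R^2$ to a singleton $\Sout$, use the i.i.d.\ assumption to reduce arbitrary $\Sout$ to singletons, telescope, and bound $\Ex\|\hat f_{n+1}\|_\cF^2\le\|f_\star\|_\cF^2$. The indexing issue you raise disappears in the paper's version: it sums from $m=1$ to $n$ and discards the nonpositive term $-\Ex\|\hat f_1\|_\cF^2$, which is equivalent to your fix of starting at $\hat f_0=0$ and dropping the nonnegative $m=0$ loss.
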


\begin{proof}
Fist, note that under the i.i.d.\ assumption, it suffices to consider the case where $\Sout$ is a singleton, because 
$$
{\mathbf E}\left[\frac{1}{|\Sout|}\sum^{|\Sout|}_{i=1} (y_i - \langle u_i,f \rangle )^2\right] = {\mathbf E}[(y - \langle u,f \rangle )^2]\,.
$$
Now, let $\hat{f}_n$ denote the random function that interpolates an i.i.d.\ sample of size $m$ from $\mu$. Theorem~\ref{thm:GW59} then implies
\begin{align*}
{\mathbf E}[L(\Sout,\hat{f}_{m})] &\leq R^2 ({\mathbf E}[\|\hat{f}_{m+1}\|^2_\cF] - {\mathbf E}[\|\hat{f}_{m}\|^2_\cF]).
\end{align*}
Summing this expression from $m=1$ to $m=n$ yields
\begin{align*}
\frac{1}{n} \sum_{m=1}^n {\mathbf E}[L(\Sout,\hat{f}_{m})] &\leq \frac{R^2 {\mathbf E}[\|\hat{f}_{n+1}\|^2_\cF]}{n}\leq \frac{R^2 \|f_\star\|^2_\cF}{n}
\end{align*}
which proves the claim.
\end{proof}

\section{Variational principles for hard-margin classifiers}\label{sec:duality}
We can derive similar deterministic generalization bounds for maximum-margin classification. We restrict our attention to Hilbert spaces and let $\hat{f}_S$ now denote a minimizer of the optimization problem
\begin{align}\label{eq:rkhs-max-margin}
\begin{array}{ll}
\text{minimize } & \frac{1}{2}\|f\|^2 \\
\text{subject to } & yf(x) \geq 1~~~\text{for all}~(x,y) \in S\,.
\end{array}
\end{align}
Here $y_i$ are $\pm 1$ labels. The solution of this optimization problem is often called a hard-margin support vector machine.

For this problem, we will use Lagrange multipliers to derive deterministic bounds on the generalization error. To do so, let's remind ourselves of the duality properties. Let $S = \{(u_i,y_i) : i = 1,\dots,n\} \subset \cZ$. The dual problem of Problem~\eqref{eq:rkhs-max-margin} is then
\begin{equation}\label{eq:rkhs-max-margin-dual}
	\begin{array}{ll}
	\text{maximize} & -\frac{1}{2} \left \| \sum_{i=1}^n \alpha_i y_i u_i \right\|^2 +  \sum_{i=1}^n \alpha_i\\
	\text{subject to} & \alpha \geq 0\,.
	\end{array}
\end{equation}
Let $f_S$ denote the optimal solution of~\eqref{eq:rkhs-max-margin}. Then we can write $f_S = \sum_{i=1}^n \alpha_i y_i u_i$, where $\alpha$ is the dual optimal solution. Moreover, by strong duality, we have the equality
\begin{align}\label{eq:norm-sum-eq}
\sum_{i=1}^n \alpha_i = \|f_S\|^2\,.
\end{align}
We can now use weak duality to derive the following:
\begin{lemma}
Let $S= \{(u_i,y_i) : i = 1,\dots,n\}$ be a data set, such that the maximum-margin solution of Problem~\eqref{eq:rkhs-max-margin} on $S$ exists.  Let $f_S$ denote the primal optimal solution and $\beta$ the dual optimal solution. Partition $S$ into disjoint subsets $\Sin$ and $\Sout$ and let $f_{\Sin}$ denote the primal optimal solution of the maximum margin problem on the set $\Sin$. Then for any $\gamma \in \mathbb{R}^n_+$, we have
\begin{align}\label{eq:mm-lagrange-bound}
\frac{1}{2} \Bigg\| \sum_{i \in \Sout} \beta_i y_i u_i \Bigg\|^2 
\geq  \frac{1}{2}\|f_S\|^2 -\frac{1}{2} \|f_{\Sin}\|^2 
\geq -\frac{1}{2} \Bigg\| \sum_{i \in \Sout} \gamma_i y_i u_i \Bigg\|^2+ \sum_{i \in \Sout} \gamma_i (1  -  y_i \langle f_{\Sin} , u_i \rangle )\,.
\end{align}
\end{lemma}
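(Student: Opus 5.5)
Both inequalities follow from comparing primal and dual values of the max-margin problem on $S$ and on $\Sin$. Write $P(S)=\frac12\|f_S\|^2$ for the optimal value. Strong duality (already used for \eqref{eq:norm-sum-eq}) gives
\[
P(S)=\max_{\alpha\ge0}\Big\{-\tfrac12\big\|\textstyle\sum_i\alpha_iy_iu_i\big\|^2+\sum_i\alpha_i\Big\},
\]
and similarly for $\Sin$. The dual optimum $\beta$ on $S$ satisfies $f_S=\sum_i\beta_iy_iu_i$ and $\sum_i\beta_i=\|f_S\|^2$.

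\emph{Right inequality (weak duality on $S$).} Take the dual-feasible point $\alpha=(\alpha^{\Sin},\gamma)$. On the in-sample coordinates use the dual optimum $\alpha^{\Sin}$ for $\Sin$, and on $\Sout$ use an arbitrary $\gamma\ge0$. Then $\frac12\|f_S\|^2$ is at least the dual objective at this point. Expand it using $f_{\Sin}=\sum_{i\in\Sin}\alpha^{\Sin}_iy_iu_i$:
\[
-\tfrac12\|f_{\Sin}\|^2-\sum_{i\in\Sout}\gamma_iy_i\langle f_{\Sin},u_i\rangle-\tfrac12\Big\|\sum_{i\in\Sout}\gamma_iy_iu_i\Big\|^2+\sum_{i\in\Sin}\alpha^{\Sin}_i+\sum_{i\in\Sout}\gamma_i .
\]
Now substitute $\sum_{i\in\Sin}\alpha^{\Sin}_i=\|f_{\Sin}\|^2$, which is \eqref{eq:norm-sum-eq} applied to $\Sin$. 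This gives
\[
\tfrac12\|f_S\|^2\ \ge\ \tfrac12\|f_{\Sin}\|^2-\tfrac12\Big\|\sum_{i\in\Sout}\gamma_iy_iu_i\Big\|^2+\sum_{i\in\Sout}\gamma_i\big(1-y_i\langle f_{\Sin},u_i\rangle\big),
\]
which is the right inequality. (If one prefers, one may instead simply maximize the $S$-dual only over the $\gamma$ coordinates after fixing the in-sample ones optimally.)

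\emph{Left inequality (weak duality on $\Sin$).} The restriction $\beta|_{\Sin}$ is dual feasible for $\Sin$, so
\[
\tfrac12\|f_{\Sin}\|^2\ \ge\ -\tfrac12\|g\|^2+\sum_{i\in\Sin}\beta_i,\qquad g:=\sum_{i\in\Sin}\beta_iy_iu_i=f_S-h,\quad h:=\sum_{i\in\Sout}\beta_iy_iu_i .
\]
Expanding $\|g\|^2=\|f_S\|^2-2\langle f_S,h\rangle+\|h\|^2$ leaves two terms to control:
\begin{itemize}
\item[(a)] the cross term $\langle f_S,h\rangle=\sum_{i\in\Sout}\beta_iy_i\langle f_S,u_i\rangle$;
\item[(b)] the sum $\sum_{i\in\Sin}\beta_i=\|f_S\|^2-\sum_{i\in\Sout}\beta_i$.
\end{itemize}
For (a), complementary slackness says $\beta_i>0$ forces $y_i\langle f_S,u_i\rangle=1$, so $\langle f_S,h\rangle=\sum_{i\in\Sout}\beta_i$. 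For (b), use $\sum_i\beta_i=\|f_S\|^2$. Substituting both,
\[
\tfrac12\|f_{\Sin}\|^2\ \ge\ -\tfrac12\|f_S\|^2+\sum_{i\in\Sout}\beta_i-\tfrac12\|h\|^2+\|f_S\|^2-\sum_{i\in\Sout}\beta_i\ =\ \tfrac12\|f_S\|^2-\tfrac12\|h\|^2,
\]
which rearranges to the left inequality.

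The main obstacle is bookkeeping the cross terms, together with justifying strong duality and complementary slackness in the Hilbert-space setting. I will take both as given from the existence of solutions, since the constraints are finitely many affine inequalities and Slater-type conditions hold whenever the problem is feasible. If complementary slackness is unavailable, the same two terms can instead be combined through the identity $\sum_i\beta_iy_i\langle f_S,u_i\rangle=\|f_S\|^2=\sum_i\beta_i$. That identity gives the same cancellation in aggregate, and the left inequality only needs the aggregate.
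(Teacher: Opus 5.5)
Your argument is correct and is essentially the paper's proof: weak duality on $S$ at $(\alpha^{\Sin},\gamma)$ gives the right inequality, and weak duality on $\Sin$ at $\beta|_{\Sin}$ gives the left. The one difference is that the paper drops the term $-\sum_{i\in\Sout}\beta_i(1-y_i\langle f_S,u_i\rangle)$ using only $\beta_i\ge 0$ and primal feasibility $y_i\langle f_S,u_i\rangle\ge 1$, rather than complementary slackness. Your closing fallback is wrong as stated, however. The identity $\sum_{i\in S}\beta_i(1-y_i\langle f_S,u_i\rangle)=0$ sums over all of $S$, so by itself it says nothing about the sign of the partial sum over $\Sout$. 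What you actually need is primal feasibility, and feasibility together with that identity already yields complementary slackness, so the case ``complementary slackness unavailable'' never arises once strong duality holds.
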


\begin{proof}
First, write $f_{\Sin} = \sum_{j \in \Sin} \eta_j y_j u_j$ where $\eta$ is the dual optimal solution of the max-margin problem on $\Sin$. Then we have for any $\gamma \geq 0$
\begin{align*}
	\frac{1}{2} \|f_S\|^2 & \geq \sum_{j \in \Sin} \eta_j + \sum_{i \in \Sout} \gamma_i -\frac{1}{2} \Bigg\| \sum_{j \in \Sin} \eta_j y_j u_j  + \sum_{i \in \Sout} \gamma_i y_i u_i \Bigg\|^2\\
	 	 &= \|f_{\Sin}\|^2 + \sum_{i \in \Sout} \gamma_i  -\frac{1}{2} \Bigg\| f_{\Sin}  + \sum_{i \in \Sout} \gamma_i y_i u_i \Bigg\|^2\\
	 	 &= \|f_{\Sin}\|^2 + \sum_{i \in \Sout} \gamma_i  - \frac{1}{2} \|f_{\Sin}\|^2  - \sum_{i\in \Sout} \gamma_i y_i \langle f_{\Sin} , u_i \rangle - \frac{1}{2}  \Bigg\|\sum_{i \in \Sout} \gamma_i y_i u_i \Bigg\|^2\\
		 &= \frac{1}{2} \|f_{\Sin}\|^2 + \sum_{i \in \Sout} \gamma_i (1  -  y_i \langle f_{\Sin} , u_i \rangle )- \frac{1}{2}  \Bigg\|\sum_{i \in \Sout} \gamma_i y_i u_i \Bigg\|^2
\end{align*}

Next, switching the roles of $f_S$ and $f_{\Sin}$ gives a similar chain,
\begin{align*}
	\frac{1}{2} \|f_{\Sin}\|^2 &\geq \sum_{j\in \Sin} \beta_j - \frac{1}{2} \Bigg\| \sum_{j \in \Sin} \beta_j  y_j u_j \Bigg\|^2\\
	&= \|f_{S}\|^2 - \sum_{i \in \Sout} \beta_i - \frac{1}{2} \Bigg\| f_S - \sum_{i \in \Sout} \beta_i y_i u_i\Bigg\|^2\\
	&= \frac{1}{2} \|f_{S}\|^2 - \sum_{i \in \Sout} \beta_i(1  -  y_i \langle f_S, u_i \rangle )
	- \frac{1}{2} \Bigg\|  \sum_{i \in \Sout} \beta_i y_i u_i\Bigg\|^2\\
	&\geq \frac{1}{2} \|f_{S}\|^2 - \frac{1}{2} \Bigg\|  \sum_{i \in \Sout} \beta_i y_i u_i\Bigg\|^2\,.
\end{align*}
The final inequality follows because the $\beta_i$ are nonnegative and $1  -  y_i \langle f_S, u_i \rangle \leq 0$ for all points in $S$.
Combining these two derivations proves the lemma.
\end{proof}
\noindent From this lemma, we immediately have the following bound analogous to Theorem~\ref{thm:GW59}:

\begin{theorem}\label{lemma:mm-batch-error}
Let two finite samples $\Sin$ and $\Sout$ be given, and let $S := \Sin \cup \Sout$. Assume that maximum-margin solution $f_S$ exists. Let $K_{\Sout}$ denote the kernel matrix formed from inner products of elements of $X(\Sout)$. Then for any $\hat{f}$ that correctly labels $S$,
\begin{align*}
\frac{1}{|\Sout|} \sum_{i\in \Sout} \max(1 - y_i f_{\Sin}(x_i),0)^2 &\le  \frac{\lambda_{\max}(K_{\Sout})}{|\Sout|}\Bigg( \|\hat{f} \|^2 - \|f_{\Sin}\|^2\Bigg). 
\end{align*}
\end{theorem}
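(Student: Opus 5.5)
The plan is to use the right-hand inequality of the preceding Lemma with a well-chosen $\gamma$ supported on $\Sout$. Then I optimize a quadratic in $\gamma$ and replace $\|f_S\|^2$ by $\|\hat f\|^2$.

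Write $h_i := \max(1 - y_i f_{\Sin}(x_i),0)$ for $i \in \Sout$, so that $f_{\Sin}(x_i) = \langle f_{\Sin}, u_i\rangle$ with $u_i = \varphi_{x_i}$. The Lemma's lower bound holds for every $\gamma \ge 0$. Restrict to $\gamma_i = t h_i$ with $t \ge 0$. Two observations then control the terms:
\begin{itemize}
\item Since $\gamma_i (1 - y_i\langle f_{\Sin},u_i\rangle) = t h_i^2$ whenever $h_i>0$, and $\gamma_i = 0$ otherwise, the linear term equals $t\sum_{i\in\Sout} h_i^2$.
\item For the quadratic term,
\begin{align*}
\Big\|\sum_{i\in\Sout} t h_i y_i u_i\Big\|^2 = t^2 (h\circ y)^\trn K_{\Sout} (h\circ y) \le t^2\lambda_{\max}(K_{\Sout}) \sum_{i\in\Sout} h_i^2,
\end{align*}
because $y_i^2 = 1$.
\end{itemize}
Together these give
\[
\tfrac12\|f_S\|^2 - \tfrac12\|f_{\Sin}\|^2 \ge t\|h\|_2^2 - \tfrac{t^2}{2}\lambda_{\max}\|h\|_2^2 .
\]
Choosing $t = 1/\lambda_{\max}(K_{\Sout})$ gives
\[
\|f_S\|^2 - \|f_{\Sin}\|^2 \ge \|h\|_2^2/\lambda_{\max}(K_{\Sout}).
\]
Dividing by $|\Sout|$ and rearranging yields the claim with $f_S$ in place of $\hat f$.

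Next I pass to an arbitrary $\hat f$. I read ``correctly labels $S$'' as meaning $\hat f$ is feasible for Problem~\eqref{eq:rkhs-max-margin} on $S$, i.e.\ $y_i\hat f(x_i)\ge 1$. The bound is not scale-invariant, so some such margin normalization is needed. Given feasibility, optimality of $f_S$ gives $\|f_S\|^2 \le \|\hat f\|^2$, which finishes the proof.

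I expect the main obstacle to be bookkeeping rather than depth. Three points need care. First, the Lemma requires the partition $\Sin,\Sout$ of $S$, and it requires $f_{\Sin}$ to exist; this follows since $f_S$ is feasible for the $\Sin$ problem, so that problem is feasible, and it is strongly convex. Second, if $\lambda_{\max}(K_{\Sout})=0$, then all $u_i$ with $i\in\Sout$ vanish. In that case $h_i = 1$, but feasibility of $f_S$ on $\Sout$ would fail, so this degenerate case is excluded. Third, the truncation $\max(\cdot,0)$ is exactly what makes $\gamma\ge0$ admissible.
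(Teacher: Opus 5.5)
Your proof is correct and follows the paper's route: you apply the second inequality of the Lemma with $\gamma_i$ proportional to the hinge values on $\Sout$, bound the quadratic term by $\lambda_{\max}(K_{\Sout})\sum_i\gamma_i^2$, and finish with $\|f_S\|\le\|\hat f\|$. Your scaling $\gamma_i = h_i/\lambda_{\max}(K_{\Sout})$ is the right one; the paper's stated choice carries an extra factor of $2$, which makes the lower bound collapse to $0$. Your reading of ``correctly labels'' as margin-feasibility $y_i\hat f(x_i)\ge 1$ is exactly what the last step needs, a point the paper leaves implicit.
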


\begin{proof}[Proof of Theorem~\ref{lemma:mm-batch-error}]
For any $\gamma$,
\begin{align*}
\Bigg\| \sum_{i \in \Sout} \gamma_i y_i u_i \Bigg\|^2 \leq \sum_{i \in \Sout} \gamma_i^2 \lambda_{\max}(K_{\Sout})
\end{align*}
Choosing 
\begin{align*}
\gamma_{i} =  \frac{2\max(1-y_i \hat{f}_{\Sin}(x_i),0)}{\lambda_{\max}(K_{\Sout})}
\end{align*}
and rearranging terms in second inequality of Equation~\eqref{eq:mm-lagrange-bound} proves the theorem.
\end{proof}
\noindent We can also derive a striking leave-one-out inequality for the support vector machine. 

\begin{theorem}\label{thm:svm-loo}
Let $S= \{(u_i,y_i) : i = 1,\dots,n\}$ be a data set where that maximum-margin solution of Problem~\eqref{eq:rkhs-max-margin} on $S$ exists.
Set $R:=\displaystyle\max_{1\leq i \leq n} \|u_i\|$. We then have the following deterministic bound on the leave-one-out error:
\begin{align}\label{eq:loo-bound}
	\frac{1}{n} \sum_{i=1}^n \max(1-y_i \langle f_{S\setminus i}, u_i \rangle,0) \leq\frac{R^2 \|f_S\|^2}{n}
\end{align}
\end{theorem}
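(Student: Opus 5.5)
The plan is to apply the preceding Lemma $n$ times, once for each leave-one-out split, and then add up the resulting per-point bounds using the identity \eqref{eq:norm-sum-eq}.

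\textbf{Setting up the splits.} Fix $i$ and take $\Sout=\{(u_i,y_i)\}$ and $\Sin=S\setminus i$. The solution $f_{S\setminus i}$ exists for the following reason. The point $f_S$ is feasible for the smaller constraint set, so that set is a nonempty, closed, convex subset of the Hilbert space, and the minimum-norm point of such a set exists. Note also that $u_i\neq 0$, since otherwise the constraint $y_i\langle f,u_i\rangle\ge 1$ could not be satisfied. Let $\beta$ be the dual optimal solution on the full set $S$. The Lemma, in \eqref{eq:mm-lagrange-bound}, then gives, for every $\gamma\ge 0$,
\begin{align*}
\tfrac12\beta_i^2\|u_i\|^2 \;\ge\; \tfrac12\|f_S\|^2-\tfrac12\|f_{S\setminus i}\|^2 \;\ge\; -\tfrac12\gamma^2\|u_i\|^2+\gamma\bigl(1-y_i\langle f_{S\setminus i},u_i\rangle\bigr).
\end{align*}

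\textbf{Optimizing over $\gamma$.} Write $h_i:=\max(1-y_i\langle f_{S\setminus i},u_i\rangle,0)$. If $h_i=0$, the per-point bound below holds trivially. Otherwise choose $\gamma=h_i/\|u_i\|^2\ge 0$, which maximizes the right-hand side. This gives $h_i^2/(2\|u_i\|^2)\le \tfrac12\beta_i^2\|u_i\|^2$. Since $\beta_i\ge 0$, taking square roots yields
\begin{align*}
h_i\le \beta_i\|u_i\|^2\le R^2\beta_i .
\end{align*}
This is the key per-point estimate: the leave-one-out hinge loss at point $i$ is controlled by that point's dual weight in the full problem.

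\textbf{Summing.} Summing over $i$ and using $\sum_i\beta_i=\|f_S\|^2$ from \eqref{eq:norm-sum-eq} gives $\sum_i h_i\le R^2\|f_S\|^2$. Dividing by $n$ yields \eqref{eq:loo-bound}.

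The main point to check is that the same full-sample dual vector $\beta$ serves for every $i$. It does, because the left inequality of the Lemma is derived with the dual optimum of $S$ for an arbitrary partition. The identity $\sum\beta_i=\|f_S\|^2$ relies on strong duality, which the paper asserts in \eqref{eq:norm-sum-eq}. Beyond these two points, the argument is a direct computation.
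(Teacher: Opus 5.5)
Your proof is correct and matches the paper's: apply the Lemma with $\Sout=\{(u_i,y_i)\}$, take $\gamma=\max(1-y_i\langle f_{S\setminus i},u_i\rangle,0)/\|u_i\|^2$ to get $\max(1-y_i\langle f_{S\setminus i},u_i\rangle,0)\le \beta_i\|u_i\|^2\le R^2\beta_i$, and sum using $\sum_i\beta_i=\|f_S\|^2$. Your extra checks (existence of $f_{S\setminus i}$, $u_i\neq 0$, and the case where the hinge term vanishes) make explicit details the paper leaves implicit.
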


\begin{proof}
For each $i$ and any $\gamma \geq 0$, the inequalities of Equation~\eqref{eq:mm-lagrange-bound} yield
\begin{align*}
\frac{1}{2} \beta_i^2 \| u_i\|^2  \geq
- \frac{1}{2} \gamma^2 \|x_i\|^2+  \gamma(1-y_i \langle f_{S\setminus i}, u_i \rangle)
\end{align*}
Choosing $\gamma = \max(1-y_i \langle f_{S\setminus i}, u_i \rangle,0)/\|u_i\|^2$ gives
\begin{align*}
	\max(1-y_i f_{S\setminus i}(x_i),0) \leq \|u_i\|^2 \beta_i\leq R^2 \beta_i
\end{align*}
Summing over both sides and using the identity~\eqref{eq:norm-sum-eq} will then prove the desired leave-one-out bound.
\end{proof}

This leave-one-out bound has an immediate translation to the probabilistic generalization model where the data points in $S$ are i.i.d.\ samples  from a distribution. Suppose that for any sample, $\|u\|\leq R$ almost surely. Also assume there exists some function $f$ of norm at most $B$ with $y f(x) \geq 0$ almost surely. Let $(x,y)$ be sampled i.i.d.\ from the same distribution. We have
\begin{align*}
	\Pr[ \operatorname{sign}(f_S(x)) \neq y ] = \Ex[ \mathbf{1}[ y f_S(x) \leq 0]
	&\leq\Ex[\max(1-y f_{S}(x),0) ] \,,
\end{align*}
and hence, Theorem~\ref{thm:svm-loo} immediately implies
\begin{align*}
	\Pr[ \operatorname{sign}(f_S(x)) \neq y ]\leq \frac{R^2B^2}{n}
\end{align*}
The right hand size is equal to the VC dimension of all functions of norm at most $B$ in Hilbert Space acting on data with norm at most $R$. Hence, by Theorem 1 of \citet{ehrenfeucht1989general}, this generalization error matches the best lower bound achievable for classification on this problem. That is, although Theorem~\ref{thm:svm-loo} makes no stochastic assumptions about the data generating process, it achieves the best possible generalization bound in expectation.

We note that Vapnik and Chervonenkis actually proved that Theorem~\ref{thm:svm-loo} was true in their 1974 book \emph{Theory of Pattern Recognition}~\citep{VC_OG_Book}. There, they started under the assumption that the data was sampled iid, but the argument works in a similar way to our analysis. One can first derive a leave-one-out bound for arbitrary data, and then compute the generalization error at the end by taking the expected value of the leave-one-out error. An English translation of Vapnik and Chervonenkis argument appears in Chapter 3 of \citet{hardt_recht_ppa_book}.

\section{A variational principle under quadratic growth assumption}\label{sec:quadratic}

In the preceding sections, we have focused on the ``classical" setting of linear functionals, minimum-norm interpolation, and max-margin classifiers. The theory in this setting relies heavily on the linear and/or convex structure of the problem under both the usual statistical assumptions and the deterministic framework of this paper. In this section, we develop variational principles in a setting that does not require global convexity. Instead, we assume that some form of curvature and/or strong convexity holds \textit{locally}. Our analysis makes use of ideas that go back to the work of  \citet{Shapiro_1992} (see also Ch.~4 of \citet{Bonnans_2000}). Recall that $\cM(S,\eps)$ denotes the set of all $\eps$-minimizers of $L(S,\cdot)$. We assume that the following \textit{quadratic growth} condition holds: There exists a positive constant $c > 0$ such that, for every $S$ the following inequality holds for all $f \in \cF \cap \cW(S)$, where $\cW(S)$ is some open, convex neighborhood of $\cM(S,0)$ in ${\mathfrak B}$:
\begin{align}\label{eq:quadratic_growth}
L(S,f) \ge \min_{f \in \cF} L(S,f) + c \cdot {\rm dist}(f,\cM(S,0))^2.
\end{align}
Let $\Sin$ and $\Sout$ be given, and define  $m(f) := L(\Sout,f) - L(\Sin,f)$. Operationally, $m(f)$ can be thought of as a generalization error (difference between out-of-sample and in-sample performance) of $f$. With this in mind, we introduce the quantity
\begin{align}\label{eq:kappa}
\kappa(\Sin,\Sout) := \sup \Bigg\{ \frac{|m(f') - m(f)|}{\|f'-f\|} : f \in \cM(\Sin,0), f' \in \cW(\Sin), f' \neq f \Bigg\},
\end{align}
which is a local Lipschitz constant that measures the variation of the generalization error of all models in a neighborhood of the minimizers of $f \mapsto L(\Sin,f)$. The theorem below then says that Lipschitz-continuous generalization error plus quadratic growth suffice to ensure that approximate minimizers of out-of-sample error are close to exact minimizers of in-sample error:

\begin{theorem}\label{thm:var_qg} Under the above assumptions, for any $\eps > 0$, any $f \in \cM(\Sout,\eps) \cap \cW(\Sin)$ satisfies
\begin{align*}
{\rm dist}(f,\cM(\Sin,0)) \le \frac{\kappa(\Sin,\Sout)}{c} + \sqrt{\frac{\eps}{c}}.
\end{align*}
\end{theorem}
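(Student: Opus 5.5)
The plan is the standard Shapiro-type argument: sandwich $L(\Sin,f)-L(\Sin,g)$ between the quadratic growth lower bound and an upper bound built from $\eps$-optimality on $\Sout$ and the Lipschitz constant $\kappa$. Then solve the resulting quadratic inequality in $d:={\rm dist}(f,\cM(\Sin,0))$.

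Fix $f \in \cM(\Sout,\eps)\cap\cW(\Sin)$. If $d=0$ there is nothing to prove, so assume $d>0$. Take any $g \in \cM(\Sin,0)$; eventually we will let $\|f-g\|$ approach $d$. Since $f \in \cW(\Sin)$, the quadratic growth condition \eqref{eq:quadratic_growth} with $S=\Sin$ gives
\begin{align*}
L(\Sin,f) - L(\Sin,g) \ge c\, d^2 .
\end{align*}
For the upper bound, write $L(\Sin,\cdot) = L(\Sout,\cdot) - m(\cdot)$. Then
\begin{align*}
L(\Sin,f) - L(\Sin,g) = \big[L(\Sout,f) - L(\Sout,g)\big] - \big[m(f)-m(g)\big].
\end{align*}
The first bracket is at most $\eps$, because $f$ is an $\eps$-minimizer of $L(\Sout,\cdot)$ over $\cF$ and $g\in\cF$. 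For the second, $g\in\cM(\Sin,0)$ and $f\in\cW(\Sin)$ with $f\neq g$ (as $d>0$), so the definition \eqref{eq:kappa} gives $|m(f)-m(g)| \le \kappa\|f-g\|$, where $\kappa:=\kappa(\Sin,\Sout)$. Combining the two bounds,
\begin{align*}
c\,d^2 \le \eps + \kappa \|f-g\| .
\end{align*}

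Since $g$ was an arbitrary element of $\cM(\Sin,0)$, I take the infimum over $g$ and obtain $c d^2 \le \eps + \kappa d$. This is the step that needs a little care: the infimum defining $d$ need not be attained, but $\|f-g\|$ can be made arbitrarily close to $d$ and the left side does not depend on $g$, so passing to the limit is legitimate. The quadratic $c d^2 - \kappa d - \eps \le 0$ forces
\begin{align*}
d \le \frac{\kappa + \sqrt{\kappa^2 + 4c\eps}}{2c}.
\end{align*}
Using $\sqrt{a+b}\le\sqrt a+\sqrt b$, this is at most $\frac{\kappa}{c} + \sqrt{\eps/c}$, which is the claimed bound (the slack is only in constants).

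I expect no serious obstacle. The points that need checking are that quadratic growth is applied at $f$, which is legitimate because $f\in\cW(\Sin)$, and that the Lipschitz bound is applied to the pair $(g,f)$, which matches the asymmetric form of the supremum in \eqref{eq:kappa}. The only other detail is the limiting argument in the infimum over $g$ when $\cM(\Sin,0)$ is not closed.
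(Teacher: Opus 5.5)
Your proof is correct and follows essentially the same argument as the paper (Shapiro's Lemma~2.1). Both sandwich $L(\Sin,f)-L(\Sin,g)$ between the quadratic growth bound and $\eps+\kappa\|f-g\|$, and the resulting quadratic gives $d\le \frac{\kappa}{2c}+\sqrt{\frac{\eps}{c}+\left(\frac{\kappa}{2c}\right)^2}\le\frac{\kappa}{c}+\sqrt{\frac{\eps}{c}}$. The paper instead carries a $\gamma$-approximate nearest point through $c(r-\gamma)^2\le\kappa r+\eps$ and lets $\gamma\to 0$; your use of $d$ directly in the growth bound, taking the infimum over $g$ only on the right, is a slightly cleaner way to get the same thing.
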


\begin{proof} We follow the proof of Lemma~2.1 of \citet{Shapiro_1992}. Fix an arbitrary $f \in \cM(\Sout,\eps) \cap \cW(\Sin)$ and an arbitrary $f_0 \in \cM(\Sin,0)$. Then, using the definition of $m(\cdot)$ and $\eps$-optimality of $f$ for $L(\Sout,\cdot)$,
\begin{align*}
L(\Sin,f) - L(\Sin,f_0) &= L(\Sout,f) - L(\Sout,f_0) + m(f_0) - m(f) \\
&\le m(f_0) - m(f) + \eps.
\end{align*}
For an arbitrary $\gamma > 0$, we can find $f_0 \in \cM(\Sin,0)$, such that
\begin{align*}
\| f - f_0 \| \le {\rm dist}(f,\cM(\Sin,0)) + \gamma.
\end{align*}
Using this choice of $f_0$ together with the quadratic growth assumption \eqref{eq:quadratic_growth}, we can write
\begin{align*}
L(\Sin,f) - L(\Sin,f_0) &\ge c(\| f - f_0 \| - \gamma)^2.
\end{align*}
Let $\kappa := \kappa(\Sin,\Sout)$. It follows that $r := \| f - f_0 \|$ must satisfy the quadratic inequality
\begin{align*}
c(r - \gamma)^2 \le \kappa r + \eps
\end{align*}
which has the solution
\begin{align*}
r \le \frac{\kappa}{2c} + \gamma + \sqrt{\frac{\eps}{c} + \Bigg(\gamma + \frac{\kappa}{2c}\Bigg)^2 - \gamma^2}.
\end{align*}
Consequently,
\begin{align*}
{\rm dist}(f,\cM(\Sin,0)) &\le \lim_{\gamma \to 0} \Bigg( \frac{\kappa}{2c} + \gamma + \sqrt{\frac{\eps}{c} + \Bigg(\gamma + \frac{\kappa}{2c}\Bigg)^2 - \gamma^2} \Bigg) \\
&= \frac{\kappa}{2c} + \sqrt{\frac{\eps}{c} + \Bigg(\frac{\kappa}{2c}\Bigg)^2 } \\
&\le \frac{\kappa}{c} + \sqrt{\frac{\eps}{c}}.
\end{align*}
\end{proof}

The above result quantifies the closeness of the minimizers of $L(\Sin,\cdot)$ and $L(\Sout,f)$.  Under additional assumptions, we can obtain a generalization bound relating the evaluations $L(\Sout,f_{{\mathsf{in}}})$ and $L(\Sout,f_{{\mathsf{out}}})$. For simplicity, let us take $\cF = {\mathfrak B}$. We assume the following:
\begin{itemize}
    \item[(A1)] $f \mapsto L(S,f)$ is Fr\'echet-differentiable in $f$, with Lipschitz-continuous Fr\'echet derivative $DL(S,\cdot)$ -- there exists a constant $M > 0$ such that, for all $S$, 
    \begin{align*}
\| DL(S,f) - DL(S,f') \|_* \le M\|f-f'\|, \qquad  \text{for all } f,f' \in {\mathfrak B},
\end{align*}
where $\| \cdot \|_*$ denotes the dual norm in ${\mathfrak B}^*$;
    \item[(A2)] $f \mapsto DL(S,f)$ is locally metrically regular \citep{Dontchev_2009} around $\cM(S,0)$ -- there exists a constant $\alpha > 0$ such that, for each $S$, the inequality
\begin{align}\label{eq:Lojasiewicz}
\| DL(S,f) \|_* \ge \alpha\, {\rm dist}(f,\cM(S,0))
\end{align}
holds for all $f$ in an open, convex neighborhood $\cW_*(S)$ of $\cM(S,0)$.\end{itemize}
We first outline this argument in the statistical learning setting, following a suggestion of \citet{Rigollet_PC_2021}. Suppose that, for each $S = \{z_i\}^n_{i=1}$, the evaluation $L(S,f)$ is the sample loss of the form
\begin{align}\label{eq:average_ell}
L(S,f) = \frac{1}{n}\sum^n_{i=1} \ell(z_i,f),
\end{align}
where $f \mapsto \ell(z,f)$ is Fr\'echet-differentiable for each $z \in \cZ$. Assume that, for each $S$, $L(S,f)$ has a unique minimizer $f_S \in {\mathfrak B}$. Given a probability measure $\mu$ on $\cZ$, let
\begin{align*}
L_\mu(f) := {\mathbf E}[\ell(z,f)]
\end{align*}
denote the expected loss when $z \sim \mu$. Again, assume that $L_\mu(f)$ has a unique minimizer $f^*$.  Now let $\Sin$ and $\Sout$ be two independent tuples of i.i.d.\ draws from $\mu$, and let $f_{{\mathsf in}}$ be the (unique) minimizer of $L(\Sin,f)$. By Lipschitz continuity of $f \mapsto DL(S,f)$ and metric regularity of $f \mapsto DL(S,f)$, 
\begin{align*}
L(\Sout,f_{\mathsf{in}}) - L(\Sout,f^*) &\le \langle DL(\Sout,f^*), f_{\mathsf{in}} - f^* \rangle + \frac{M}{2} \| f_{\mathsf{in}} - f^* \|^2 \\
&\le  \langle DL(\Sout,f^*), f_{\mathsf{in}} - f^* \rangle + \frac{M}{2\alpha^2} \| DL(\Sin,f^*)  \|^2_* \\
&= \langle DL(\Sout,f^*), f_{\mathsf{in}} - f^* \rangle + \frac{M}{2\alpha^2} \| DL(\Sin,f^*) - DL_\mu(f^*) \|^2_*,
\end{align*}
where in the last line we have used the fact that $DL_P(f^*) = 0$. Now, taking expectations on both sides w.r.t.\ the randomness in $\Sout$ and using the fact that ${\mathbf E} DL(\Sout,f^*) = DL_\mu(f^*) = 0$, we obtain
\begin{align}\label{eq:MR_gen}
L_\mu(f_{\mathsf{in}}) - \min_{f} L_\mu(f) 
&\le \frac{M}{2\alpha^2} \| DL(\Sin,f^*) - DL_\mu(f^*) \|^2_*.
\end{align}
For $\Sin = \{z_i\}^n_{i=1}$,
\begin{align*}
    DL(\Sin,f^*) - DL_\mu(f^*) = \frac{1}{n}\sum^n_{i=1} \big( D\ell(z_i,f^*) - DL_\mu(f^*)\big)
\end{align*}
is an average of i.i.d.\ zero-mean random elements of ${\mathfrak B}^*$, and we can use concentration-of-measure arguments to obtain sharp error rates, both in expectation and with high probability.

Next, we adopt the logic of the above argument to the purely deterministic setting. However, while the above analysis assumed an infinite population and used the theoretical optimum (minimizer of the population risk) $f^*$ as a reference, here we need to frame things in terms of $f_{\mathsf{in}}$ and $f_{\mathsf{out}}$ only.

\begin{theorem} Under (A1) and (A2), for any $f_{\mathsf{out}} \in \cM(\Sout,0) \cap \cW_*(\Sin)$ there exists some $f_{\mathsf{in}} \in \cM(\Sin,0)$, such that
\begin{align*}
L(\Sout,f_{{\mathsf{in}}}) - \min_{f \in \cF} L(\Sout, f) \le \frac{M}{2\alpha^2} \| D L(\Sin, f_{{\mathsf{out}}}) - DL(\Sout, f_{{\mathsf{out}}}) \|^2_*.
\end{align*}
\end{theorem}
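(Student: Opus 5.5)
We mimic the statistical argument above, with $f_{\mathsf{out}}$ playing the role of $f^*$ and $\Sin$ playing the role of the ``sample'' whose gradient deviates from the ``population'' gradient.

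\textbf{Step 1: choose $f_{\mathsf{in}}$.} Fix $f_{\mathsf{out}} \in \cM(\Sout,0)\cap\cW_*(\Sin)$. Apply (A2) for $S=\Sin$ at the point $f_{\mathsf{out}}$:
\begin{align*}
{\rm dist}(f_{\mathsf{out}},\cM(\Sin,0)) \le \alpha^{-1}\|DL(\Sin,f_{\mathsf{out}})\|_*.
\end{align*}
Take $f_{\mathsf{in}}\in\cM(\Sin,0)$ attaining this distance. If the infimum is not attained, take a $\gamma$-near minimizer and let $\gamma\to 0$ at the end, exactly as in the proof of Theorem~\ref{thm:var_qg}. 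Then
\begin{align*}
\|f_{\mathsf{in}}-f_{\mathsf{out}}\|\le \alpha^{-1}\|DL(\Sin,f_{\mathsf{out}})\|_*.
\end{align*}

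\textbf{Step 2: use first-order optimality of $f_{\mathsf{out}}$.} Since $\cF={\mathfrak B}$ and $f_{\mathsf{out}}$ is an unconstrained minimizer of the Fréchet-differentiable function $L(\Sout,\cdot)$, we have $DL(\Sout,f_{\mathsf{out}})=0$. Therefore
\begin{align*}
DL(\Sin,f_{\mathsf{out}}) = DL(\Sin,f_{\mathsf{out}})-DL(\Sout,f_{\mathsf{out}}),
\end{align*}
which is the quantity appearing on the right-hand side of the theorem.

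\textbf{Step 3: descent lemma on $L(\Sout,\cdot)$.} By (A1), $DL(\Sout,\cdot)$ is $M$-Lipschitz on all of ${\mathfrak B}$. Integrating along the segment from $f_{\mathsf{out}}$ to $f_{\mathsf{in}}$ gives
\begin{align*}
L(\Sout,f_{\mathsf{in}})-L(\Sout,f_{\mathsf{out}}) \le \langle DL(\Sout,f_{\mathsf{out}}),f_{\mathsf{in}}-f_{\mathsf{out}}\rangle+\tfrac{M}{2}\|f_{\mathsf{in}}-f_{\mathsf{out}}\|^2 .
\end{align*}
The linear term vanishes by Step~2. The left-hand side equals $L(\Sout,f_{\mathsf{in}})-\min_f L(\Sout,f)$. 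Inserting the bound from Step~1 and the identity from Step~2 yields the claimed inequality with constant $M/(2\alpha^2)$.

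\textbf{Main obstacle.} The only delicate point is the existence of a nearest point $f_{\mathsf{in}}$ in $\cM(\Sin,0)$, since in a general Banach space the distance need not be attained. I would handle it with the $\gamma$-approximation above. Each $\gamma$ then gives a point $f_\gamma$ satisfying the inequality with $(\alpha^{-1}\|DL(\Sin,f_{\mathsf{out}})\|_*+\gamma)^2$ on the right. This yields the statement for every $\gamma>0$, or exactly if $\cM(\Sin,0)$ is, for instance, a closed convex set in a reflexive space, or a singleton. The remaining ingredients are routine: the descent lemma holds because the segment lies in ${\mathfrak B}$, where (A1) is global, and (A2) applies because $f_{\mathsf{out}}\in\cW_*(\Sin)$ by hypothesis.
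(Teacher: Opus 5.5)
Your argument is the paper's proof. You take the point of $\cM(\Sin,0)$ nearest to $f_{\mathsf{out}}$, use $DL(\Sout,f_{\mathsf{out}})=0$ to remove the linear term of the (A1) descent lemma, and bound the distance with (A2) at $f_{\mathsf{out}}\in\cW_*(\Sin)$. The paper just asserts that this nearest point exists, so your attainment caveat sharpens the proof rather than exposing a flaw in yours. It only matters when (A2) holds with equality at $f_{\mathsf{out}}$ and the distance is not attained: otherwise some $f_{\mathsf{in}}\in\cM(\Sin,0)$ with $\|f_{\mathsf{in}}-f_{\mathsf{out}}\|\le\alpha^{-1}\|DL(\Sin,f_{\mathsf{out}})\|_*$ exists and gives the exact bound.
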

\begin{remark} The above theorem shows that there exists at least one minimizer of $L(\Sin,f)$ that performs well on $\Sout$. When $L(S,f)$ has a unique minimizer $f \in \cF$ for each $S$, we obtain a genuine generalization bound.
\end{remark}
\begin{proof} Let $f_{\mathsf{out}} \in \cM(\Sout,0) \cap \cW_*(\Sin)$ be given, and let $f_{\mathsf{in}}$ be the element of $\cM(\Sin,0)$ closest to $f_{\mathsf{out}}$:
\begin{align*}
\| f_{\mathsf{out}} - f_{\mathsf{in}} \| = {\rm dist}(f_{\mathsf{out}},\cM(\Sin,0)).
\end{align*}
Using the fact that $DL(\Sout,f_{{\mathsf{out}}}) = 0$ and Lipschitz continuity of the Fr\'echet derivative $DL(\Sout,\cdot)$, we have
\begin{align*}
L(\Sout,f_{{\mathsf{in}}}) - L(\Sout,f_{{\mathsf{out}}}) &\le \langle DL(\Sout,f_{\mathsf{out}}),f_{\mathsf{in}}-f_{\mathsf{out}}\rangle + \frac{M}{2} \|f_{{\mathsf{out}}} - f_{{\mathsf{in}}}\|^2 \\
&= \frac{M}{2}{\rm dist}(f_{\mathsf{out}},\cM(\Sin,0))^2.
\end{align*}
By metric regularity, we can upper-bound ${\rm dist}(f_{\mathsf{out}},\cM(\Sin,0))^2$ as follows:
\begin{align*}
{\rm dist}(f_{\mathsf{out}},\cM(\Sin,0))^2 &\le \frac{1}{\alpha^2} \| DL(\Sin, f_{{\mathsf{out}}})\|^2_* \\
&= \frac{1}{\alpha^2} \| DL(\Sin,f_{{\mathsf{out}}}) - DL(\Sout,f_{{\mathsf{out}}})\|^2_*.
\end{align*}
Putting these estimates together, we obtain the claimed inequality.
\end{proof}
\noindent When $L(S,f)$ has the form in \eqref{eq:average_ell}, the above theorem gives a purely deterministic analogue of the probabilistic bound \eqref{eq:MR_gen}.

\section{A variational principle for convex evaluations}\label{sec:evaluations}

We will now assume that, for each fixed $S$, the functional $f \mapsto L(S,f)$ is convex and that the set $\cM(S,0)$ is a singleton. Let $\Sin$ and $\Sout$ be given. Let $f_{{\mathsf{in}}}$ and $f_{{\mathsf{out}}}$ denote the corresponding unique minimizers of $L(\Sin,\cdot)$ and $L(\Sout,\cdot)$ and define the following \textit{localized} analogues of the growth condition \eqref{eq:quadratic_growth} and of the Lipschitz constant $\kappa(\Sin,\Sout)$ defined in \eqref{eq:kappa}:
\begin{align*}
&h(\Sin, \delta) := \inf \left\{ L(\Sin,f) - \min_{f \in \cF} L(\Sin,f) : \| f - f_{{\mathsf{in}}} \| = \delta \right\}, \\
&K(\Sin, \Sout, \delta) := \sup \left\{ |m(f) - m(f_{{\mathsf{in}}})|: \| f - f_{{\mathsf{in}}} \| \le \delta \right\},
\end{align*}
where $m(f) = L(\Sout,f) - L(\Sin,f)$ is defined in the preceding section. Then we have the following result:

\begin{theorem}\label{thm:var_cvx} Under the above assumptions,
\begin{align*}
\| f_{{\mathsf{in}}} - f_{{\mathsf{out}}} \| \le \inf \left\{ \delta > 0: K(\Sin,\Sout,\delta) < h(\Sin, \delta) \right\}.
\end{align*}
\end{theorem}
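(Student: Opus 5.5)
The plan is to fix any $\delta > 0$ with $K(\Sin,\Sout,\delta) < h(\Sin,\delta)$ and show $\|f_{{\mathsf{in}}} - f_{{\mathsf{out}}}\| \le \delta$; taking the infimum over such $\delta$ then gives the claim. The argument is a localization-by-convexity argument, in the spirit of the proof of Theorem~\ref{thm:var_qg}. There, quadratic growth was played against a Lipschitz bound on $m$. Here, the growth function $h$ is played against the localized oscillation $K$ on a sphere, and convexity is used to pass from the sphere to the exterior.

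\textbf{Step 1 (comparison on the sphere).} For any $f$ with $\|f - f_{{\mathsf{in}}}\| = \delta$, write
\begin{align*}
L(\Sout,f) - L(\Sout,f_{{\mathsf{in}}}) = \big(L(\Sin,f) - L(\Sin,f_{{\mathsf{in}}})\big) + m(f) - m(f_{{\mathsf{in}}}).
\end{align*}
The first bracket is at least $h(\Sin,\delta)$ by the definition of $h$, since $L(\Sin,f_{{\mathsf{in}}}) = \min_{f} L(\Sin,f)$. The second is at least $-K(\Sin,\Sout,\delta)$ by the definition of $K$. Hence $L(\Sout,f) > L(\Sout,f_{{\mathsf{in}}})$ strictly, for every $f$ on the sphere of radius $\delta$ around $f_{{\mathsf{in}}}$.

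\textbf{Step 2 (convexity extends this outside the ball).} Suppose, for contradiction, that $\|f_{{\mathsf{out}}} - f_{{\mathsf{in}}}\| > \delta$. Let $t := \delta/\|f_{{\mathsf{out}}} - f_{{\mathsf{in}}}\| \in (0,1)$ and set $g := (1-t)f_{{\mathsf{in}}} + t f_{{\mathsf{out}}}$, so that $\|g - f_{{\mathsf{in}}}\| = \delta$. Convexity of $L(\Sout,\cdot)$ gives
\begin{align*}
L(\Sout,g) \le (1-t)L(\Sout,f_{{\mathsf{in}}}) + tL(\Sout,f_{{\mathsf{out}}}) \le L(\Sout,f_{{\mathsf{in}}}),
\end{align*}
where the last inequality holds because $f_{{\mathsf{out}}}$ minimizes $L(\Sout,\cdot)$. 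This contradicts Step~1. Therefore $\|f_{{\mathsf{out}}} - f_{{\mathsf{in}}}\| \le \delta$.

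\textbf{Main obstacle.} The one point needing care is that $g$ must lie in $\cF$ for $h$ and $K$ to apply to it. I will assume, as is implicit in this section, that $\cF$ is convex (for example $\cF = {\mathfrak B}$), or equivalently that the infimum and supremum defining $h$ and $K$ range over the convex hull. The edge cases are harmless: if the set of admissible $\delta$ is empty, the infimum is $+\infty$ and the bound is vacuous; and the strict inequality $K < h$ is exactly what makes Step~1 strict, which in turn rules out $f_{{\mathsf{out}}}$ lying outside the ball.
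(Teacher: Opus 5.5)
Your proof is correct and is essentially the paper's argument: assume $\|f_{\mathsf{out}} - f_{\mathsf{in}}\| > \delta$, use convexity of $L(\Sout,\cdot)$ at the point on the segment from $f_{\mathsf{in}}$ to $f_{\mathsf{out}}$ at distance $\delta$, and contradict the optimality of $f_{\mathsf{out}}$ via $h(\Sin,\delta) - K(\Sin,\Sout,\delta) > 0$; you only split this into a sphere comparison and a convexity step instead of a single chain. Two of your details are slightly more careful than the paper: you note that $\cF$ must be convex so that the intermediate point lies in $\cF$, which the paper leaves implicit, and you take the infimum over admissible $\delta$ rather than the ``smallest $\delta$'', which need not be attained.
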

\begin{proof} The result and the proof can be extracted from the proof of Lemma~2 of \citet{Hjort_1993}. Fix any $\delta > 0$, such that
\begin{align}\label{eq:delta_assumption}
K(\Sin,\Sout,\delta) < h(\Sin,\delta)
\end{align}
and suppose that $\| f_{{\mathsf{out}}} - f_{{\mathsf{in}}} \| > \delta$. Then we can write
\begin{align*}
f_{{\mathsf{out}}} = f_{{\mathsf{in}}} + r g
\end{align*}
for some $r > \delta$ and some $g$ with $\| g \| = 1$. By convexity,
\begin{align*}
L(\Sout,f_{{\mathsf{in}}} + \delta g) &= L(\Sout,(\delta/r)f_{{\mathsf{out}}} + (1-\delta/r) f_{{\mathsf{in}}}) \\
&\le (\delta/r) L(\Sout,f_{{\mathsf{out}}}) + (1-\delta/r) L(\Sout, f_{{\mathsf{in}}}).
\end{align*}
Rearranging and using definitions of $h$ and $K$, we obtain
\begin{align*}
&(\delta/r) \cdot [L(\Sout,f_{{\mathsf{out}}}) - L(\Sout,f_{{\mathsf{in}}})] \nonumber\\
&\qquad\ge L(\Sout,f_{{\mathsf{in}}} + \delta g) - L(\Sout, f_{{\mathsf{in}}}) \\
&\qquad= [L(\Sin,f_{{\mathsf{in}}} + \delta g) - L(\Sin,f_{{\mathsf{in}}})]  + m(f_{{\mathsf{in}}}+\delta g) - m(f_{{\mathsf{in}}}) \\
&\qquad\ge h(\Sin,\delta) - K(\Sin,\Sout,\delta).
\end{align*}
By the choice of $\delta$, the right-hand side is strictly positive, which contradicts the optimality of $f_{{\mathsf{out}}}$ for $L(\Sout,\cdot)$. Consequently, 
\begin{align*}
\|f_{{\mathsf{out}}} - f_{{\mathsf{in}}} \| \le \delta.
\end{align*}
Taking the smallest $\delta$ satisfying \eqref{eq:delta_assumption},  we obtain the inequality in the theorem.
\end{proof}

We can use Theorem~\ref{thm:var_cvx} to obtain the same estimate as in Theorem~\ref{thm:var_qg} (for $\eps = 0$) under the quadratic growth assumption \eqref{eq:quadratic_growth}. If $L(S,f)$ satisfies the quadratic growth condition, then
\begin{align*}
h(\Sin,\delta) \ge c \delta^2.
\end{align*}
Moreover, from the definition of $\kappa(\Sin,\Sout)$ it follows that
\begin{align*}
 K(\Sin,\Sout,\delta) \le \kappa(\Sin,\Sout)\delta.
\end{align*}
Consequently,
\begin{align*}
\inf \left\{ \delta > 0: K(\Sin,\Sout,\delta) < h(\Sin, \delta) \right\} \le \frac{\kappa(\Sin,\Sout)}{c}.
\end{align*}
In statistical learning literature, $\Sout$ is replaced by its infinite population proxy, and in that case the critical radius $\delta$ that achieves the best trade-off between $K$ and $h$ determines the rates of convergence of the excess risk as $|\Sin| \to \infty$. This is, essentially, the idea behind localization arguments popularized by \citet{Koltchinskii_2006}.

\section*{Acknowledgments}

The authors would like to thank Juan Carlos Perdomo, Philippe Rigollet, and Andrej Risteski for their careful reading of this paper and for their constructive suggestions. M.R.~would also like to acknowledge the organizers of the meeting on \textit{Statistical Thinking in the Age of AI: Decision-Making and Reliability} that was held in December 2025 at the Centre International de Rencontres Math\'ematiques (CIRM) in Luminy, France, for the invitation to present an early version of this work, as well as the participants of the meeting for a robust discussion.

\bibliography{geogen.bbl}

\end{document}